\icmltitlerunning{%
Adding vs. Averaging in Distributed Primal-Dual Optimization
}
\newcommand{\cocoa}{\textsc{CoCoA}\xspace} 
\newcommand{\cocoap}{\textsc{CoCoA$\!^{\bf \textbf{\footnotesize+}}$}\xspace}
\newcommand{\localSDCA}{\textsc{LocalSDCA}\xspace}
\newcommand{\Exp}{\mathbb{E}}                      %
\newcommand\tagthis{\addtocounter{equation}{1}\tag{\theequation}}
\DeclareMathOperator{\dom}{dom}         %
\DeclareMathOperator*{\argmax}{arg\,max}
\newcommand{\eqdef}{:=}
\newcommand{\R}{\mathbb{R}}                      %
\newcommand{\N}{n}                               %
\newcommand{\gap}{G}
\newcommand{\xv}{ {\bf x}}
\newcommand{\uv}{ {\bf u}}
\newcommand{\wv}{ {\bf w}}
\newcommand{\alphav}{ {\boldsymbol \alpha}}
\newcommand{\zetav}{ {\boldsymbol \zeta}}
\newcommand{\ev}{ {\bf e}}
\newcommand{\0}{ {\bf 0}}
\newcommand{\aggpar}{\gamma}
\newcommand{\vsubset}[2]{#1_{[#2]}}
\newcommand{\vc}[2]{#1^{(#2)}}                   %
\newcommand{\removed}[1]{}
\newcommand{\bP}{\mathcal{P}}
\newcommand{\bD}{\mathcal{D}}
\newcommand{\Ggk}{\mathcal{G}^{\sigma'}_k\hspace{-0.08em}}
\newcolumntype{R}[2]{%
    >{\adjustbox{angle=#1,lap=\width-(#2)}\bgroup}%
    l%
    <{\egroup}%
}
\theoremstyle{plain}
\newtheorem{theorem}{Theorem}
\newtheorem{lemma}[theorem]{Lemma}
\newtheorem{assumption}{Assumption}
\newtheorem{remark}[theorem]{Remark}
\newtheorem{corollary}[theorem]{Corollary}
\theoremstyle{definition}
\newtheorem{definition}[theorem]{Definition}
\begin{document} 

\twocolumn[
\icmltitle{%
Adding vs. Averaging in Distributed Primal-Dual Optimization
}

\icmlauthor{Chenxin  Ma$^*$}{chm514@lehigh.edu}
\icmladdress{Industrial and Systems Engineering, Lehigh University, USA}
\icmlauthor{Virginia Smith$^*$}{vsmith@berkeley.edu}
\icmladdress{University of California, Berkeley, USA}
\icmlauthor{Martin  Jaggi}{jaggi@inf.ethz.ch}
\icmladdress{ETH Z\"urich, Switzerland}
\icmlauthor{Michael I. Jordan}{jordan@cs.berkeley.edu}
\icmladdress{University of California, Berkeley, USA}
\icmlauthor{Peter  Richt\'arik}{peter.richtarik@ed.ac.uk}
\icmladdress{School of Mathematics, University of Edinburgh, UK}
\icmlauthor{Martin  Tak\'a\v{c}}{Takac.MT@gmail.com}
\icmladdress{Industrial and Systems Engineering, Lehigh University, USA}
$^*$Authors contributed equally.

\icmlkeywords{optimization algorithms, large-scale machine learning, distributed systems}

\vskip 0.3in
]

\begin{abstract}  
Distributed optimization methods for large-scale machine learning suffer 
from a communication bottleneck. It is difficult to reduce this bottleneck while still efficiently and accurately aggregating partial work from different machines. 
In this paper, we present a novel generalization of the recent communication-efficient primal-dual framework (\cocoa) for distributed optimization. 
Our framework, \cocoap, allows for \emph{additive} combination of local updates 
to the global parameters at each iteration, whereas previous schemes with convergence guarantees only 
allow conservative averaging. 
We give stronger (primal-dual) convergence 
rate guarantees for both \cocoa as well as our new variants, and generalize 
the theory for both methods to cover non-smooth convex loss functions. 
We provide an extensive experimental comparison that shows the markedly improved performance of \cocoap on several real-world distributed datasets, especially when scaling up the number of machines.
\end{abstract} 

\section{Introduction}
With the wide availability of large datasets that exceed the storage capacity of single machines, distributed optimization methods for machine learning have become increasingly important. 
Existing methods require significant communication between workers, frequently equaling the amount of local computation (or reading of local data). As a result, distributed machine learning suffers significantly from a communication bottleneck on real world systems, where communication is typically several orders of magnitudes slower than reading data from main memory.

In this work we focus on optimization problems with empirical loss minimization structure, i.e., objectives that are a sum of the loss functions of each datapoint. This includes the most commonly used regularized variants of linear regression and classification methods.
For this class of problems, the recently proposed \cocoa 
approach \cite{Yang:2013vl,jaggi2014communication} develops a communication-efficient primal-dual
scheme that targets the communication bottleneck, allowing more computation on data-local subproblems native to 
each machine before communication. By appropriately 
choosing the amount of local computation per round, this framework allows 
one to control the trade-off between \emph{communication} and \emph{local 
computation} based on the systems hardware at hand.  
 
However, the performance of \cocoa (as well as related primal SGD-based methods) is significantly reduced by the need to average updates between all machines. As the number of machines $K$ grows, the updates get diluted and slowed by $1/K$, e.g., in the case where all machines except one would have already reached the solutions of their respective partial optimization tasks. On the other hand, if the updates are instead added, the algorithms can diverge, as we will observe in the practical experiments below.

To address both described issues, in this paper we develop a novel generalization of the local \cocoa subproblems 
assigned to each worker, making the framework more powerful in the 
following sense:
Without extra computational cost, the set of locally computed updates from 
the modified subproblems (one from each machine) can be combined 
more efficiently between machines. 
The proposed \cocoap updates can be 
aggressively \emph{added} (hence the `+'-suffix), %
which yields much faster 
convergence both in practice and in theory. This difference is particularly significant as the number of 
machines~$K$ becomes large. 
 
\subsection{Contributions}

\paragraph{Strong Scaling.}
To our knowledge, our framework is the first to 
exhibit favorable \emph{strong scaling} for the class of problems considered, as 
the number of machines~$K$ increases and the data size is kept fixed.
More precisely, while the convergence rate of \cocoa degrades 
as $K$ is increased, the stronger theoretical 
convergence rate here is -- in the worst case -- \emph{independent} of $K$. 
Our experiments in Section \ref{sec:experiments} confirm the 
improved speed of convergence.
Since the number of communicated vectors is only one per round and worker, 
this favorable scaling might be surprising. Indeed, for existing methods, splitting data among more machines generally
increases communication requirements \cite{Shamir:2014tp}, which
can severely affect overall runtime.

\vspace{-0.5mm}
\paragraph{Theoretical Analysis of Non-Smooth Losses.}
While the existing analysis for \cocoa in \citep{jaggi2014communication} only 
covered smooth loss functions, here we extend the class of functions where 
the rates apply, additionally covering, e.g., Support Vector Machines and non-smooth 
regression variants.
We provide a primal-dual convergence rate for both \cocoa as well as our 
new method \cocoap in the case of general convex ($L$-Lipschitz) losses.

\vspace{-0.5mm}
\paragraph{Primal-Dual Convergence Rate.}
Furthermore, we additionally strengthen the rates by showing stronger 
primal-dual convergence for both algorithmic frameworks, which are almost 
tight to their objective-only counterparts. Primal-dual rates for \cocoa had not previously been analyzed in the general convex case.
Our primal-dual rates allow efficient and practical certificates for the 
optimization quality, e.g., for stopping criteria. The new rates apply to both 
smooth and non-smooth losses, and for both \cocoa as well as the extended 
\cocoap.

\vspace{-0.5mm}
\paragraph{Arbitrary Local Solvers.}
\cocoa as well as \cocoap allow the use of arbitrary local solvers on each machine.

\vspace{-0.5mm}
\paragraph{Experimental Results.}%
We provide a thorough experimental comparison with competing algorithms using several real-world distributed datasets. Our practical results confirm the strong scaling of \cocoap as the number of machines~$K$ grows, while competing methods, including the original \cocoa, slow down significantly with larger $K$. We implement all algorithms in
\textsf{\small Spark}, and our code is publicly available at: {\small \url{github.com/gingsmith/cocoa}}. %

\vspace{-0.5mm}
\subsection{History and Related Work}

While optimal algorithms for the serial (single machine) case are already well researched and understood, the literature in the distributed setting is relatively sparse. In particular, details on optimal trade-offs between computation and communication, as well as optimization or statistical accuracy, are still widely unclear.
For an overview over this currently active research field, we refer the reader to \cite{Balcan:2012tc,richtarik2013distributed,Duchi:2013te,Yang:2013vl,WrightAsynchrous14,fercoq2014fast,jaggi2014communication,Shamir:2014tp,DANE,DISCO,ALPHA} and the references therein.
We provide a detailed comparison of our proposed framework to the related work in Section \ref{sec:relatedWork}.

\vspace{-0.5mm}
\section{Setup}
We consider regularized empirical loss minimization problems of the following well-established form:\vspace{-2mm}
\begin{equation}
\label{eq:primal}
 \min_{\wv\in \R^d} 
 \left\{
 \bP(\wv)\eqdef  
  \frac1{\N} \sum_{i=1}^\N \ell_i( \xv_i^T \wv) + \frac\lambda 2 \|\wv\|^2 \right\}
\end{equation}
Here the vectors 
$\{\xv_i\}_{i=1}^n \subset \R^d$ represent the training data examples, and the $\ell_i(.)$ are arbitrary convex real-valued loss functions (e.g., hinge loss), possibly depending on label information for the $i$-th datapoints.  The constant $\lambda>0$ is the regularization parameter. 

The above class includes many standard problems of wide interest in machine learning, statistics, and signal processing, including support vector machines, regularized %
linear and logistic regression, %
ordinal regression, and others.

\vspace{-0.5mm}
\paragraph{Dual Problem, and Primal-Dual Certificates.}
The conjugate dual of \eqref{eq:primal} takes following form:\vspace{-1mm}
\begin{equation}
\label{eq:dual}
\max_{\alphav \in \R^\N}
  \Bigg\{
 \bD(\alphav )\eqdef  
 -\frac1n \sum_{j=1}^\N \ell_j^*(- \alpha_j)
 -\frac{\lambda}{2} 
  \left\|\frac{A \alphav}{\lambda\N}  \right\|^2  \Bigg\}\vspace{-0.5mm}
\end{equation}
Here the data matrix $A=[\xv_1, \xv_2, \dots, \xv_n] \in \R^{d\times n}$ collects all data-points as its columns, 
and $\ell_j^*$ is the conjugate function to $\ell_j$. See, e.g., \cite{ShalevShwartz:2013wl} for several concrete applications.

It is possible to assign for any dual vector $\alphav \in \R^n$ 
a corresponding primal feasible point
\begin{equation}
\label{eq:PDMapping}
\wv(\alphav)
 = \tfrac1{\lambda n} A \alphav
\end{equation}
The duality gap function is then given by:
\begin{align}
\label{eq:gap}
\gap(\alphav)
 := \bP(\wv(\alphav))-\bD(\alphav)
\end{align}
By weak duality, every value $\bD(\alphav)$ at a dual candidate~$\alphav$ provides a lower bound on every primal value $\bP(\wv)$. The duality gap is therefore a certificate on the approximation quality: The distance to the unknown true optimum $\bP(\wv^*)$ must always lie within the duality gap, i.e., $\gap(\alphav) = \bP(\wv)-\bD(\alphav) \ge \bP(\wv) - \bP(\wv^*) \ge 0$.

In large-scale machine learning settings like those considered here, the availability of such a computable measure of approximation quality is a significant benefit during training time. Practitioners using classical primal-only methods such as SGD have no means by which to accurately detect if a model has been well trained, as $P(\wv^*)$ is unknown.

\paragraph{Classes of Loss-Functions.}
To simplify presentation, we assume %
that all loss functions $\ell_i$ are non-negative, and  \vspace{-2mm}
\begin{equation}
 \ell_i(0)\leq 1  \qquad \forall i 
\label{eq:afswfevfwaefa}
\end{equation}
\begin{definition}[$L$-Lipschitz continuous loss]
A function $\ell_i: \R \to \R$ is $L$-Lipschitz continuous if $\forall a,b \in \R$, we have
\begin{equation}
 | \ell_i(a) - \ell_i(b) | \leq L |a-b|
\end{equation}
\end{definition}
\begin{definition}[$(1/\mu)$-smooth loss]
A function $\ell_i: \R \to \R$ is $(1/\mu)$-smooth  
if it is differentiable and its derivative is $(1/\mu)$-Lipschitz continuous, i.e.,
  $\forall a,b \in \R$, we have
\begin{equation}
 | \ell_i'(a) - \ell_i'(b) | \leq \frac1{\mu} |a-b|
\end{equation}
\end{definition}

\section{The \cocoap Algorithm Framework}

In this section we present our novel \cocoap framework. $\cocoap$ inherits the many benefits of CoCoA as it remains a highly flexible and scalable, communication-efficient framework for distributed optimization. $\cocoap$ differs algorithmically in that we modify the form of the local subproblems \eqref{eq:subproblem} to allow for more aggressive additive updates (as controlled by $\aggpar$). We will see that these changes allow for stronger convergence guarantees as well as  improved empirical performance. Proofs of all statements in this section are given in the supplementary material.

\paragraph{Data Partitioning.}

We write $\{\mathcal{P}_k
\}_{k=1}^K$ for the %
given partition of the datapoints $[n]\eqdef \{1,2,\dots,n\}$ over the $K$ worker machines.
We denote the size of each part by $n_k=|\mathcal{P}_k|$.
For any $k\in[K]$
and $\alphav\in \R^n$
we use the notation
$\vsubset{\alphav}{k}\in \R^n$ for the vector\vspace{-3mm}
$$
(\vsubset{\alphav}{k})_i
 :=
 \begin{cases}
 0,&\mbox{if}\ i\notin \mathcal{P}_k,\\
 \alpha_i,&\mbox{otherwise.}
\end{cases}
$$

\paragraph{Local Subproblems in \cocoap.}

We can define a data-local subproblem of the original dual optimization problem \eqref{eq:dual}, which can be solved on machine $k$ and only requires accessing data which is already available locally, i.e., datapoints with $i\in\mathcal{P}_k$. More formally, each machine $k$ is assigned the following local subproblem, depending only on the previous shared primal vector $\wv\in\R^d$, and the change in the local dual variables~$\alpha_i$ with $i\in\mathcal{P}_k$:
\begin{equation} 
\max_{\vsubset{\Delta \alphav}{k}\in\R^{n}} \ %
\Ggk(  \vsubset{\Delta \alphav}{k}; \wv, \vsubset{\alphav}{k})
\end{equation} 
where \begin{align} 
&\Ggk(  \vsubset{\Delta \alphav}{k}; \wv, \vsubset{\alphav}{k})
\eqdef
-\frac1n\sum_{i \in \mathcal{P}_k} 
\ell_i^*(-\alpha_i - (\vsubset{\Delta \alphav}{k})_i)
\nonumber
\\
&\hspace{-2mm} 
- \frac1K 
\frac{\lambda}{2}
\|\wv\|^2
-\frac1n
\wv^T A \vsubset{\Delta \alphav}{k}
- \frac\lambda2
 \sigma'  \Big\|\frac1{\lambda n} A \vsubset{\Delta \alphav}{k}\Big\|^2
 \label{eq:subproblem}
\end{align}

\paragraph{Interpretation.}
The above definition of the local objective functions $\Ggk$ are such that they closely approximate the global dual objective $\bD$, as we vary the `local' variable~$\vsubset{\Delta \alphav}{k}$, in the following precise sense:
\begin{lemma}
\label{lem:RelationOfDTOSubproblems}
For any dual
$\alphav, \Delta \alphav %
\in \R^n$, primal $\wv = \wv(\alphav)$ and real values $\aggpar,\sigma'$ satisfying~\eqref{eq:sigmaPrimeSafeDefinition}, it holds that\vspace{-2mm}
\begin{align*}
&\bD\Big(
\alphav +\aggpar 
\sum_{k=1}^K
\vsubset{\Delta \alphav}{k}\!
\Big) \geq 
 (1-\aggpar) \bD(\alphav)\nonumber
 \end{align*} \vspace{-2.3em}
 \begin{align} \hspace{8em} + \aggpar 
 \sum_{k=1}^K 
 \Ggk(\vsubset{\Delta \alphav}{k}; \wv, \vsubset{\alphav}{k}) \vspace{-2em}
\end{align}
\end{lemma}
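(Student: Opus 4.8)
The plan is to verify the inequality by splitting $\bD$ into its two natural constituents — the conjugate-loss sum and the quadratic regularizer — and bounding each piece separately. First I would note that since each $\vsubset{\Delta \alphav}{k}$ is supported only on $\mathcal{P}_k$, the local updates reassemble the full update, $\sum_{k=1}^K \vsubset{\Delta \alphav}{k} = \Delta\alphav$, so the left-hand side is just $\bD(\alphav + \aggpar\,\Delta\alphav)$. Keeping $\wv = \wv(\alphav) = \tfrac1{\lambda n}A\alphav$ fixed throughout, I would then compare the loss contributions and the quadratic contributions on both sides term by term.

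For the loss part, the relevant term is $-\tfrac1n\sum_j \ell_j^*(-\alpha_j - \aggpar(\Delta\alpha)_j)$, which I would treat pointwise. Writing $-\alpha_j - \aggpar(\Delta\alpha)_j = (1-\aggpar)(-\alpha_j) + \aggpar(-\alpha_j - (\Delta\alpha)_j)$ and using convexity of each $\ell_j^*$ (valid since $\aggpar\in[0,1]$, which is part of the hypothesis in \eqref{eq:sigmaPrimeSafeDefinition}) gives $\ell_j^*(-\alpha_j - \aggpar(\Delta\alpha)_j) \le (1-\aggpar)\ell_j^*(-\alpha_j) + \aggpar\,\ell_j^*(-\alpha_j - (\Delta\alpha)_j)$. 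Summing over $j$ and regrouping the second piece by the partition $\{\mathcal{P}_k\}$ reproduces exactly the $(1-\aggpar)$-scaled loss of $\bD(\alphav)$ plus the $\aggpar$-scaled loss terms sitting inside each $\Ggk$, so the loss part already obeys the claimed bound.

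The crux is the quadratic term, and this is where I expect the main obstacle. Expanding $\|A(\alphav + \aggpar\Delta\alphav)\|^2 = \|A\alphav\|^2 + 2\aggpar(A\alphav)^T\! A\Delta\alphav + \aggpar^2\|A\Delta\alphav\|^2$, I would verify that the $\|A\alphav\|^2$ pieces (split as $\tfrac1K$ across the $\Ggk$) and the linear cross term $(A\alphav)^T\! A\Delta\alphav$ (additive over $k$ because $A\Delta\alphav = \sum_k A\vsubset{\Delta \alphav}{k}$) cancel identically between the two sides. What remains is the single scalar inequality $\aggpar^2\|A\Delta\alphav\|^2 \le \aggpar\sigma'\sum_{k=1}^K \|A\vsubset{\Delta \alphav}{k}\|^2$, equivalently $\aggpar\big\|\sum_k A\vsubset{\Delta \alphav}{k}\big\|^2 \le \sigma'\sum_k \|A\vsubset{\Delta \alphav}{k}\|^2$. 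This is precisely what condition \eqref{eq:sigmaPrimeSafeDefinition} is designed to guarantee: $\sigma'$ is chosen exactly so that this aggregation inequality holds, the safe choice $\sigma' = \aggpar K$ sufficing via the elementary bound $\|\sum_k \uv_k\|^2 \le K\sum_k\|\uv_k\|^2$. Because this quadratic term enters $\bD$ with a minus sign, the inequality points in the direction we need.

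Finally I would assemble the pieces: the loss comparison and the quadratic comparison each yield the required $\ge$, and adding them gives the lemma. The only genuine inequalities invoked are convexity of the conjugates $\ell_j^*$ (for the losses) and the defining property of $\sigma'$ (for the quadratic); everything else is an exact algebraic identity, which is why the aggregation bound \eqref{eq:sigmaPrimeSafeDefinition} is the real workhorse of the argument.
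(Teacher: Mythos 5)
Your proposal is correct and follows essentially the same route as the paper's proof: split $\bD$ into the conjugate-loss sum and the quadratic term, apply Jensen's inequality to each $\ell_j^*$ via the convex combination $-\alpha_j-\aggpar(\Delta\alpha)_j=(1-\aggpar)(-\alpha_j)+\aggpar(-\alpha_j-(\Delta\alpha)_j)$, observe that the $\|\wv\|^2$ and cross terms match exactly, and invoke \eqref{eq:sigmaPrimeSafeDefinition} to bound $\aggpar\bigl\|\sum_k A\vsubset{\Delta\alphav}{k}\bigr\|^2\le\sigma'\sum_k\|A\vsubset{\Delta\alphav}{k}\|^2$. The only point worth flagging is that the Jensen step needs $\aggpar\in[0,1]$, which is an implicit standing assumption in the paper as well rather than literally part of \eqref{eq:sigmaPrimeSafeDefinition}.
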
 
\vspace{-1em}

The role of the parameter $\sigma'$ is to measure the difficulty of the given 
data partition. For our purposes, we will see that it must be chosen not 
smaller than
\begin{equation}
\label{eq:sigmaPrimeSafeDefinition}
\sigma'
\geq
\sigma'_{min}
 \eqdef
 \aggpar
 \max_{\alphav\in \R^n}
 \frac{
 \|A \alphav\|^2}{\sum_{k=1}^K \|A \vsubset{\alphav}{k}\|^2} \ 
\end{equation}
\vspace{-1em}

In the following lemma, we show that this parameter can 
be upper-bounded by $\aggpar K$, which is trivial to calculate for all values $\aggpar\in\R$. We show experimentally (Section \ref{sec:experiments}) that this safe upper bound for $\sigma'$ has a minimal effect on the overall performance 
of the algorithm. Our main theorems below show 
convergence rates dependent on $\aggpar \in [\frac1K,1]$, which we refer to as the \textit{aggregation parameter}.

\begin{lemma}\label{lem:sigmaPrimeNotBad}
The choice of $\sigma' := \aggpar K$ is valid for \eqref{eq:sigmaPrimeSafeDefinition}, i.e.,
\[
\aggpar K
\geq
\sigma'_{min} 
\]
\end{lemma}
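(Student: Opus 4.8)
The plan is to reduce the claimed inequality $\aggpar K \ge \sigma'_{min}$ to an elementary norm inequality and dispatch it with Cauchy--Schwarz. Since $\aggpar>0$, by the definition \eqref{eq:sigmaPrimeSafeDefinition} of $\sigma'_{min}$ the claim is equivalent, after cancelling the common positive factor $\aggpar$, to showing that the maximum defining $\sigma'_{min}/\aggpar$ is at most $K$, i.e. that
$$
K \sum_{k=1}^K \|A \vsubset{\alphav}{k}\|^2 \ \ge\ \|A\alphav\|^2 \qquad \forall \alphav \in \R^n .
$$

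First I would exploit the fact that $\{\mathcal{P}_k\}_{k=1}^K$ is a genuine partition of $[n]$: every index $i$ lies in exactly one block, so by the definition of $\vsubset{\alphav}{k}$ we have $\alphav = \sum_{k=1}^K \vsubset{\alphav}{k}$, and hence, by linearity of $A$, $A\alphav = \sum_{k=1}^K A\vsubset{\alphav}{k}$. Writing $\vv_k \eqdef A\vsubset{\alphav}{k}\in\R^d$, the target inequality becomes the purely vectorial statement $\big\|\sum_{k=1}^K \vv_k\big\|^2 \le K \sum_{k=1}^K \|\vv_k\|^2$.

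Next I would prove this last bound directly. It follows from convexity of $\|\cdot\|^2$ (Jensen's inequality applied to the average $\frac1K\sum_k \vv_k$), or equivalently from Cauchy--Schwarz applied to the vectors $(\|\vv_1\|,\dots,\|\vv_K\|)$ and $(1,\dots,1)$:
$$
\Big\|\sum_{k=1}^K \vv_k\Big\|^2 \ \le\ \Big(\sum_{k=1}^K \|\vv_k\|\Big)^2 \ \le\ K \sum_{k=1}^K \|\vv_k\|^2 .
$$
Combining this with the identity $A\alphav = \sum_k \vv_k$ bounds the ratio in \eqref{eq:sigmaPrimeSafeDefinition} by $K$ uniformly in $\alphav$, and multiplying back by $\aggpar$ yields $\sigma'_{min}\le \aggpar K$, as required.

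There is essentially no serious obstacle here: the only points requiring care are the cancellation of $\aggpar$ (legitimate because $\aggpar>0$) and the observation that the block decomposition $\alphav = \sum_k \vsubset{\alphav}{k}$ is precisely what converts the data-dependent ratio into the constant-free inequality to which Cauchy--Schwarz applies. The bound is tight, with equality when all $\vv_k$ coincide, which is consistent with the surrounding remark that $\aggpar K$ is only a safe upper estimate of the sharper, data-dependent constant $\sigma'_{min}$.
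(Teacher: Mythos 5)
Your proof is correct. The paper itself gives no argument for this lemma, deferring instead to a citation of \cite{richtarik2013distributed}; your derivation --- cancelling $\aggpar>0$, using the partition identity $A\alphav=\sum_{k}A\vsubset{\alphav}{k}$, and applying Cauchy--Schwarz (equivalently, convexity of $\|\cdot\|^2$) to get $\bigl\|\sum_k \vv_k\bigr\|^2\le K\sum_k\|\vv_k\|^2$ --- is precisely the standard argument behind that reference, and it is complete.
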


\paragraph{Notion of Approximation Quality of the Local Solver.}

\begin{assumption}[$\Theta$-approximate solution]
\label{asm:THeta}
We assume that 
there exists $\Theta \in [0,1)$ such that 
$\forall k\in [K]$, 
the local solver at any outer iteration $t$ produces
a (possibly) randomized approximate solution $\vsubset{\Delta \alphav}{k}$,
which satisfies
\begin{align}
\label{eq:localSolutionQuality}
\Exp\big[
 \Ggk(\vsubset{\Delta \alphav^*}{k};\wv, \vsubset{\alphav}{k})
-
 \Ggk(\vsubset{\Delta \alphav}{k};\wv, \vsubset{\alphav}{k})
\big] \ 
\\ \nonumber
\leq \ \Theta
\left(
 \Ggk(\vsubset{\Delta \alphav^*}{k};\wv, \vsubset{\alphav}{k})
 -
 \Ggk({\bf 0};\wv, \vsubset{\alphav}{k})
 \right),
\end{align}
where\vspace{-2mm}
\begin{align}
\label{eq:asjfcowjfcaw}
\vsubset{\Delta \alphav^*}{k}
\in \argmax_{\Delta \alphav \in \R^n} \ 
 \Ggk(\vsubset{\Delta \alphav}{k};\wv, \vsubset{\alphav}{k}) \quad \forall k\in[K] %
 \hspace{-2.5mm}\vspace{-1mm}
\end{align}

\end{assumption} 
We are now ready to describe the \cocoap framework, shown in Algorithm \ref{alg:cocoa}.
The crucial difference compared to the existing \cocoa algorithm \cite{jaggi2014communication} is the more general local subproblem,  as defined in \eqref{eq:subproblem}, as well as the aggregation parameter $\aggpar$. 
These modifications allow the option of directly adding updates to the global vector~$\wv$.

\begin{algorithm}[h]
\caption{\cocoap Framework}
\label{alg:cocoa}

\begin{algorithmic}[1]
\STATE {\bf Input:} Datapoints $A$ distributed according to partition $\{\mathcal{P}_k\}_{k=1}^K$.
Aggregation parameter $\aggpar\!\in\!(0,1]$, 
subproblem parameter $\sigma'$ for the local subproblems
$\Ggk(  \vsubset{\Delta \alphav}{k}; \wv, \vsubset{\alphav}{k})$ for each $k\in[K]$.\\
Starting point $\vc{\alphav}{0} := \0 \in \R^n$, $\vc{\wv}{0}:=\0\in \R^d$.
\FOR {$t = 0, 1, 2, \dots $}
  \FOR {$k \in \{1,2,\dots,K\}$ {\bf in parallel over computers}}
     \STATE call the local solver, computing
     a $\Theta$-approximate solution 
     $\vsubset{\Delta \alphav}{k}$   
        of  the local subproblem \eqref{eq:subproblem} 
     \STATE update $\vsubset{\vc{\alphav}{t+1}}{k} := \vsubset{\vc{\alphav}{t}}{k} + \aggpar \, \vsubset{\Delta \alphav}{k}$
     \STATE return $\Delta \wv_k :=\frac1{\lambda n} A \vsubset{\Delta \alphav}{k}$ %
  \ENDFOR
  \STATE reduce\vspace{-7mm}
\begin{equation}\label{eq:primalGlobalUpdate}\vspace{-2mm}
\vc{\wv}{t+1}  := \vc{\wv}{t} +
  \aggpar \textstyle \sum_{k=1}^K \Delta \wv_k.
\end{equation}

\ENDFOR 
\end{algorithmic}
\end{algorithm}

\section{Convergence Guarantees}
\label{sec:convergence}

Before being able to state our main convergence results, we introduce some useful quantities and the following main lemma characterizing the effect of iterations of Algorithm~\ref{alg:cocoa}, for any chosen internal local solver.

\begin{lemma}
\label{lem:basicLemma}
Let $\ell_i^*$ be strongly\footnote{%
Note that the case of weakly convex $\ell_i^*(.)$ is explicitly allowed here as well, as the Lemma holds for the case $\mu = 0$.
} %
convex with convexity parameter
$\mu \geq 0$ with respect to the norm $\|\cdot\|$, $\forall i\in[n]$.
Then for all iterations~$t$ of Algorithm~\ref{alg:cocoa} under Assumption~\ref{asm:THeta}, and any $s\in [0,1]$, it holds that
\begin{align}
\label{eq:lemma:dualDecrease_VS_dualityGap}
&\Exp[
\bD(\vc{\alphav}{t+1})
-
\bD(\vc{\alphav}{t})
 ]
\geq
\\ \nonumber
&\qquad\qquad\qquad\qquad
\aggpar
(1-\Theta)
 \Big(
 s \gap(\vc{\alphav}{t})
-
\frac{\sigma'}{2\lambda }
\big(\frac sn \big)^2
\vc{R}{t}
 \Big), \vspace{-2mm}
\end{align}
where\vspace{-2mm}
\begin{align*}
\tagthis \label{eq:defOfR}
\vc{R}{t}&:=
-
\tfrac{ \lambda\mu n (1-s)}{\sigma' s }
 \|\vc{\uv}{t}-\vc{\alphav}{t}\|^2 
\\ \qquad \nonumber &+ 
\textstyle{\sum}_{k=1}^K   
  \| A \vsubset{  (\vc{\uv}{t}  - \vc{\alphav}{t} )}{k}\|^2,
\end{align*}
for $\vc{\uv}{t} \in\R^n$
with \vspace{-1mm}
\begin{equation}
\label{eq:defintionOfUi}
-\vc{u_i}{t}
 \in \partial \ell_i(\wv(\vc{\alphav}{t})^T \xv_i).
\end{equation}
\end{lemma}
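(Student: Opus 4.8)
The plan is to start from the one-step dual-improvement estimate of Lemma~\ref{lem:RelationOfDTOSubproblems} and then successively replace the random, only approximately optimal local updates by a single explicit deterministic test direction, after which the remaining expression collapses into the duality gap plus a controllable quadratic remainder. Since the algorithm sets $\vc{\alphav}{t+1} = \vc{\alphav}{t} + \aggpar\sum_{k=1}^K \vsubset{\Delta \alphav}{k}$ (each $\vsubset{\Delta \alphav}{k}$ being supported on $\mathcal{P}_k$), applying Lemma~\ref{lem:RelationOfDTOSubproblems} with $\alphav = \vc{\alphav}{t}$ and $\wv = \wv(\vc{\alphav}{t})$ immediately yields
\begin{equation}
\bD(\vc{\alphav}{t+1}) - \bD(\vc{\alphav}{t}) \geq -\aggpar\,\bD(\vc{\alphav}{t}) + \aggpar \textstyle\sum_{k=1}^K \Ggk(\vsubset{\Delta \alphav}{k};\wv,\vsubset{\alphav}{k}).
\end{equation}

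First I would take expectations and invoke the $\Theta$-approximation property (Assumption~\ref{asm:THeta}) on each term, writing $\Exp[\Ggk(\vsubset{\Delta \alphav}{k})] \geq (1-\Theta)\,\Ggk(\vsubset{\Delta \alphav^*}{k}) + \Theta\,\Ggk(\0;\wv,\vsubset{\alphav}{k})$. The key bookkeeping observation is that $\sum_{k=1}^K \Ggk(\0;\wv,\vsubset{\alphav}{k}) = \bD(\vc{\alphav}{t})$, since the local regularizer pieces $\tfrac1K\tfrac\lambda2\|\wv\|^2$ sum to the full regularizer and the conjugate contributions reassemble $\bD$. Substituting makes the $\Theta$-terms and the $-\aggpar\,\bD$ term cancel, leaving the clean factored bound
\begin{equation}
\Exp[\bD(\vc{\alphav}{t+1}) - \bD(\vc{\alphav}{t})] \geq \aggpar(1-\Theta)\Big(\textstyle\sum_{k=1}^K \Ggk(\vsubset{\Delta \alphav^*}{k};\wv,\vsubset{\alphav}{k}) - \bD(\vc{\alphav}{t})\Big).
\end{equation}

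The heart of the argument is to lower bound each maximizer $\vsubset{\Delta \alphav^*}{k}$ by evaluating $\Ggk$ at the explicit (suboptimal) test point $\vsubset{\Delta \alphav}{k} := s\,\vsubset{(\vc{\uv}{t}-\vc{\alphav}{t})}{k}$, which is admissible for every $s\in[0,1]$. Expanding $\Ggk$ there, I would bound the conjugate terms $\ell_i^*(-\alpha_i - s(u_i-\alpha_i))$ via $\mu$-strong convexity of $\ell_i^*$ (this generates exactly the $\mu$-dependent contribution that will enter $\vc{R}{t}$), and crucially use the Fenchel--Young equality $\ell_i^*(-\vc{u_i}{t}) = -\vc{u_i}{t}\,\wv^T\xv_i - \ell_i(\wv^T\xv_i)$, which holds because $-\vc{u_i}{t}\in\partial\ell_i(\wv^T\xv_i)$ by~\eqref{eq:defintionOfUi}. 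The linear cross-term $-\tfrac1n\wv^T A\vsubset{\Delta \alphav}{k}$ then cancels against the $-\vc{u_i}{t}\,\wv^T\xv_i$ piece, so that the surviving first-order terms reassemble precisely $s$ times the gap identity $\gap(\alphav) = \tfrac1n\sum_i[\ell_i(\wv^T\xv_i) + \ell_i^*(-\alpha_i) + \alpha_i\,\wv^T\xv_i]$, while the penalty $-\tfrac\lambda2\sigma'\|\tfrac1{\lambda n}A\vsubset{\Delta \alphav}{k}\|^2$ supplies the $\sum_{k=1}^K\|A\vsubset{(\vc{\uv}{t}-\vc{\alphav}{t})}{k}\|^2$ term. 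Matching coefficients of $s$ and $s^2$ then shows the bracketed quantity equals $s\,\gap(\vc{\alphav}{t}) - \tfrac{\sigma'}{2\lambda}(s/n)^2\,\vc{R}{t}$, which gives the claim.

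The step I expect to be the main obstacle is the coefficient bookkeeping in the last paragraph, namely verifying that the $\mu$-strong-convexity contribution $+\tfrac{\mu s(1-s)}{2n}\|\vc{\uv}{t}-\vc{\alphav}{t}\|^2$ is correctly absorbed into $\vc{R}{t}$ through its nonstandard prefactor $-\tfrac{\lambda\mu n(1-s)}{\sigma' s}$; concretely one must check that $-\tfrac{\sigma'}{2\lambda}(s/n)^2$ times that prefactor reproduces exactly $+\tfrac{\mu s(1-s)}{2n}$. Once the test direction $s\,\vsubset{(\vc{\uv}{t}-\vc{\alphav}{t})}{k}$ is fixed and the Fenchel--Young cancellation is spotted, everything else is routine algebra.
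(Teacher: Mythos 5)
Your proposal is correct and follows essentially the same route as the paper's proof: Lemma~\ref{lem:RelationOfDTOSubproblems} plus Assumption~\ref{asm:THeta} with the identity $\sum_{k}\Ggk(\0;\wv,\vsubset{\alphav}{k})=\bD(\vc{\alphav}{t})$ to factor out $\aggpar(1-\Theta)$, then the test point $s\,\vsubset{(\vc{\uv}{t}-\vc{\alphav}{t})}{k}$, $\mu$-strong convexity of $\ell_i^*$, and the Fenchel--Young equality to reassemble the duality gap. The coefficient check you flag does work out, since $-\tfrac{\sigma'}{2\lambda}\bigl(\tfrac{s}{n}\bigr)^2\cdot\bigl(-\tfrac{\lambda\mu n(1-s)}{\sigma' s}\bigr)=\tfrac{\mu s(1-s)}{2n}$, matching the strong-convexity term exactly as in the paper.
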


The following Lemma provides a uniform bound on~$\vc{R}{t}$:
\begin{lemma}
\label{lemma:BoundOnR}
If $\ell_i$ are $L$-Lipschitz 
continuous for all $i\in [n]$, then\vspace{-3mm}
\begin{equation}
\label{eq:asfjoewjofa}
\forall t: 
\vc{R}{t}
\leq 4L^2 
\underbrace{\sum _{k=1}^K 
\sigma_k  n_k}_{=: \sigma}, \vspace{-2mm} %
\end{equation}
where\vspace{-1mm}
\begin{equation}
\label{eq:definitionOfSigmaK}
\sigma_k \eqdef
 \max_{\vsubset{\alphav}{k} \in \R^n}
 \frac{\|A \vsubset{\alphav}{k}\|^2}{
 \|\vsubset{\alphav}{k}\|^2}.
\end{equation}
\end{lemma}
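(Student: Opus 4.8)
The plan is to peel $\vc{R}{t}$ (as defined in \eqref{eq:defOfR}) apart into its two summands and dispatch them separately. The first summand, $-\frac{\lambda\mu n(1-s)}{\sigma' s}\|\vc{\uv}{t}-\vc{\alphav}{t}\|^2$, carries a nonpositive coefficient for every $s\in(0,1]$ and $\mu\ge 0$ (and vanishes identically when $\mu=0$), so it can only decrease $\vc{R}{t}$ and I would simply drop it. This reduces the claim to the $s$-free estimate $\sum_{k=1}^K \|A\vsubset{(\vc{\uv}{t}-\vc{\alphav}{t})}{k}\|^2 \le 4L^2\sum_{k=1}^K \sigma_k n_k$, which is exactly why the final bound is uniform in $s$. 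For each block I would invoke the definition \eqref{eq:definitionOfSigmaK} of $\sigma_k$ directly: substituting $\vsubset{\alphav}{k} = \vsubset{(\vc{\uv}{t}-\vc{\alphav}{t})}{k}$ there gives $\|A\vsubset{(\vc{\uv}{t}-\vc{\alphav}{t})}{k}\|^2 \le \sigma_k \|\vsubset{(\vc{\uv}{t}-\vc{\alphav}{t})}{k}\|^2$, and since the $[k]$-restriction zeroes out every coordinate outside $\mathcal{P}_k$, the right-hand norm equals $\sum_{i\in\mathcal{P}_k}(\vc{u_i}{t}-\vc{\alpha_i}{t})^2$.

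Next I would bound each coordinate difference by $2L$. Two ingredients enter. First, since each $\ell_i$ is $L$-Lipschitz, all of its subgradients have magnitude at most $L$; by the defining relation \eqref{eq:defintionOfUi}, $-\vc{u_i}{t}\in\partial\ell_i(\wv(\vc{\alphav}{t})^T\xv_i)$, hence $|\vc{u_i}{t}|\le L$. Second, $L$-Lipschitz continuity of $\ell_i$ forces the conjugate $\ell_i^*$ to have domain contained in $[-L,L]$, so any $\alpha_i$ at which $\ell_i^*(-\alpha_i)$ is finite must satisfy $|\alpha_i|\le L$. Granting for the moment that the iterates obey $|\vc{\alpha_i}{t}|\le L$, the triangle inequality yields $(\vc{u_i}{t}-\vc{\alpha_i}{t})^2\le 4L^2$, so $\|\vsubset{(\vc{\uv}{t}-\vc{\alphav}{t})}{k}\|^2\le 4L^2 n_k$; summing these per-block bounds over $k$ and combining with the previous paragraph closes the argument.

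The one point that needs genuine care -- and the main obstacle -- is verifying that the dual iterates never leave $\dom\ell_i^*\subseteq[-L,L]$, since otherwise the coordinate bound on $\vc{\alpha_i}{t}$ is unjustified. I would establish this by induction on $t$. The start $\vc{\alphav}{0}=\0$ is feasible because $\ell_i^*(0)=-\inf_a\ell_i(a)\le 0$ is finite (using $\ell_i\ge 0$). For the inductive step, any approximate local solution $\vsubset{\Delta\alphav}{k}$ must keep the argument $-\vc{\alpha_i}{t}-(\vsubset{\Delta\alphav}{k})_i$ inside $\dom\ell_i^*$, as otherwise $\Ggk$ would equal $-\infty$, whereas $\vsubset{\Delta\alphav}{k}=\0$ already attains a finite value and so the optimum and any $\Theta$-approximate solution are finite. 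The update in Algorithm~\ref{alg:cocoa} sets $\vc{\alpha_i}{t+1}=\vc{\alpha_i}{t}+\aggpar(\vsubset{\Delta\alphav}{k})_i$ with $\aggpar\in(0,1]$, so $-\vc{\alpha_i}{t+1}$ is a convex combination of $-\vc{\alpha_i}{t}$ and $-\vc{\alpha_i}{t}-(\vsubset{\Delta\alphav}{k})_i$, both lying in the convex set $\dom\ell_i^*$; convexity therefore keeps $-\vc{\alpha_i}{t+1}\in\dom\ell_i^*\subseteq[-L,L]$, completing the induction and the proof.
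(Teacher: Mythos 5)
Your proof is correct and follows essentially the same route as the paper's: drop the nonpositive first term of $\vc{R}{t}$, apply the definition \eqref{eq:definitionOfSigmaK} of $\sigma_k$ blockwise, and bound each coordinate of $\vc{\uv}{t}-\vc{\alphav}{t}$ by $2L$ using the fact that an $L$-Lipschitz loss forces both its subgradients and $\dom \ell_i^*$ into $[-L,L]$ (the paper's Lemma~\ref{lemma:ajvoiewffa}). The only difference is that you explicitly verify by induction that the dual iterates stay in $\dom\ell_i^*$ — a point the paper leaves implicit — which is a welcome extra bit of rigor but not a different argument.
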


\begin{remark}
\label{rmk:asfwaefwae}
If all data-points $\xv_i$ are normalized such that
$\|\xv_i\|\leq 1$ $\forall i\in [n]$, then
$\sigma_k \leq |\mathcal{P}_k| = n_k$.
Furthermore, if we assume that the data partition is balanced, i.e., that 
$n_k = n/K$ for all $k$, then $\sigma \le n^2/K$. This can be used to bound the constants $\vc{R}{t}$, above, as  
$
\vc{R}{t} \leq  \frac{4L^2 n^2}{K}.$
\end{remark}

\subsection{Primal-Dual Convergence for General Convex Losses}

The following theorem shows the convergence for non-smooth loss functions, in terms of objective values as well as primal-dual gap.
The analysis in \cite{jaggi2014communication} only covered the case of smooth loss functions.

\begin{theorem}
\label{thm:convergenceNonsmooth}
 
Consider Algorithm \ref{alg:cocoa} with Assumption \ref{asm:THeta}. 
Let $\ell_i(\cdot)$ be $L$-Lipschitz continuous,
and $\epsilon_\gap$ $>0$ be the desired duality gap (and hence an upper-bound on primal sub-optimality).
Then after $T$ iterations, where
\begin{align}\label{eq:dualityRequirements}
T
&\geq
T_0 + 
\max\{\Big\lceil \frac1{\aggpar (1-\Theta)}\Big\rceil,\frac
{4L^2  \sigma   \sigma'}
{\lambda n^2 \epsilon_\gap
\aggpar (1-\Theta)}\},  
\\
T_0
&\geq t_0+
\left(
\frac{2}{ \aggpar (1-\Theta) }
\left(
\frac
{8L^2  \sigma   \sigma'}
{\lambda n^2 \epsilon_\gap}
-1
\right)
\right)_+,\notag
\\
t_0 &\geq 
  \max(0,\Big\lceil \tfrac1{\aggpar (1-\Theta)}
\log(
\tfrac{
 2\lambda n^2 (\bD(\alphav^* )-\bD(\vc{\alphav}{0}))
  }{4 L^2 \sigma \sigma'}
  )
 \Big\rceil),\notag
\end{align}
we have that the expected duality gap satisfies
\[
\Exp[\bP( \wv(\overline\alphav)) - \bD(\overline \alphav) ] \leq \epsilon_\gap,
\]
at the averaged iterate
\begin{equation}\label{eq:averageOfAlphaDefinition}
\overline \alphav: = \tfrac1{T-T_0}\textstyle{\sum}_{t=T_0+1}^{T-1} \vc{\alphav}{t}. 
\end{equation}

\end{theorem}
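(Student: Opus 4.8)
The plan is to combine the per-iteration dual progress from Lemma~\ref{lem:basicLemma} with the uniform bound on $\vc{R}{t}$ from Lemma~\ref{lemma:BoundOnR} to drive the \emph{dual suboptimality} down, and then convert this into a bound on the \emph{duality gap} via a summation/averaging argument. First I would substitute $\vc{R}{t}\leq 4L^2\sigma$ into \eqref{eq:lemma:dualDecrease_VS_dualityGap}. Since the $L$-Lipschitz case corresponds to weakly convex conjugates, I set $\mu=0$, which kills the first term of $\vc{R}{t}$ and makes the bound on $\vc{R}{t}$ clean and $s$-independent. Writing $\epsilon_D^{(t)}:=\bD(\alphav^*)-\bD(\vc{\alphav}{t})$ for the dual suboptimality and using $\gap(\vc{\alphav}{t})\geq \epsilon_D^{(t)}$ (since the gap upper-bounds both primal and dual suboptimality), the recursion becomes, in expectation,
\begin{equation*}
\Exp[\epsilon_D^{(t+1)}]
\leq
\Big(1-\aggpar(1-\Theta)s\Big)\epsilon_D^{(t)}
+\aggpar(1-\Theta)\frac{\sigma'}{2\lambda}\Big(\frac sn\Big)^2 4L^2\sigma.
\end{equation*}

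The core of the argument is a two-phase analysis of this recursion. With $s=1$ the contraction factor is $1-\aggpar(1-\Theta)$ and the additive noise term is a fixed constant proportional to $\tfrac{L^2\sigma\sigma'}{\lambda n^2}$; iterating geometrically shows that after $t_0$ steps the dual suboptimality falls below a threshold on the order of this constant, which is exactly where the logarithmic $t_0$ bound comes from (the log of the ratio of the initial suboptimality to the noise floor). In the second phase I would switch to a diminishing step $s=s(t)$ of the form $s_t\sim \tfrac{2}{2+\aggpar(1-\Theta)(t-T_0)}$ (or prove the analogous statement by induction): this is the standard choice that balances the contraction against the $s^2$ noise term, and one shows by induction that $\Exp[\epsilon_D^{(t)}]\leq \tfrac{2\cdot 4L^2\sigma\sigma'}{2\lambda n^2\,\aggpar(1-\Theta)(t-T_0)}$ type decay for $t>T_0$. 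The thresholds $T_0$ and $T$ in \eqref{eq:dualityRequirements} then come from demanding that this $1/t$-type decay reach $\epsilon_\gap/2$ for the dual suboptimality.

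Finally, to get the \emph{duality gap} rather than just dual suboptimality, I would sum \eqref{eq:lemma:dualDecrease_VS_dualityGap} over $t=T_0+1,\dots,T-1$ with a fixed $s$ and rearrange to isolate $\tfrac1{T-T_0}\sum_t \Exp[\gap(\vc{\alphav}{t})]$: the telescoping dual terms collapse to $\Exp[\bD(\vc{\alphav}{T})-\bD(\vc{\alphav}{T_0})]$, which is bounded by the dual suboptimality at $T_0$ (already controlled by the first phase), while the remaining term contributes the $\tfrac{\sigma'}{2\lambda}(s/n)^2 4L^2\sigma$ piece. By Jensen/convexity of the gap in $\alphav$, the average gap controls the gap at the averaged iterate $\overline\alphav$ defined in \eqref{eq:averageOfAlphaDefinition}, giving $\Exp[\gap(\overline\alphav)]\leq\epsilon_\gap$. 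Choosing $s$ optimally (of order $\tfrac{\lambda n^2\epsilon_\gap}{4L^2\sigma\sigma'}$, clipped to $[0,1]$) is what produces the precise constants and the $\max\{\cdot,\cdot\}$ in the iteration count.

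The main obstacle I anticipate is the careful two-phase bookkeeping that yields the exact expressions for $t_0$, $T_0$, and $T$: the geometric burn-in phase, the switch to diminishing steps, and the averaging step all have to be stitched together so that the noise floor from phase one sits safely below the target accuracy and the telescoped dual terms in the gap-summation do not degrade the final bound. Getting the constants and the $(\cdot)_+$ truncations to line up—especially ensuring the gap-averaging step is compatible with the variable-$s$ choice used for the dual-suboptimality decay—is the delicate part, whereas the individual inequalities (plugging in $\mu=0$, bounding $\vc{R}{t}$, and invoking $\gap\geq\epsilon_D$) are routine.
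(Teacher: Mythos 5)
Your plan matches the paper's proof essentially step for step: the same recursion obtained by combining Lemma~\ref{lem:basicLemma} (with $\mu=0$) with the bound $\vc{R}{t}\le 4L^2\sigma$ and weak duality, the same two-phase analysis (geometric burn-in with $s=1$ giving the logarithmic $t_0$, then an inductive $1/t$ decay with $s_t=\tfrac{1}{1+\frac12\aggpar(1-\Theta)(t-t_0)}$), and the same telescoping-plus-Jensen argument with a small fixed $s$ to control the gap at the averaged iterate, splitting $\epsilon_\gap$ into two halves. The only quibble is that the $1/t$ decay should be anchored at $t_0$ rather than $T_0$ (with $T_0$ then chosen large enough that the decayed suboptimality at $T_0$ is small), which is exactly the bookkeeping you flag as delicate; otherwise this is the paper's argument.
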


The following corollary of the above theorem clarifies our main result: The more aggressive adding of the partial updates, as compared averaging, offers a very significant improvement in terms of total iterations needed.
While the convergence in the `adding' case becomes independent of the number of machines $K$, the `averaging' regime shows the known degradation of the rate with growing $K$, which is a major drawback of the original \cocoa algorithm. This important difference in the convergence speed is not a theoretical artifact but also confirmed in our practical experiments below for different $K$, as shown e.g. in Figure~\ref{fig:scaling_k}.

We further demonstrate below that by choosing $\aggpar$ and $\sigma'$ accordingly, we can still recover the original \cocoa algorithm and its rate.

\begin{corollary}\label{cor:convergence}
Assume that 
all datapoints $\xv_i$ are bounded as $\|\xv_i\|\leq 1$
and that 
the data partition is balanced, i.e. that 
$n_k = n/K$ for all $k$.
We consider two different possible choices of the aggregation parameter~$\aggpar$: \vspace{-1em}
\begin{itemize}
\item 
(\cocoa Averaging, $\aggpar := \frac1K$):
In this case, $\sigma':=1$ is a valid choice which satisfies 
\eqref{eq:sigmaPrimeSafeDefinition}.
Then using $\sigma \le n^2/K$ in light of Remark \ref{rmk:asfwaefwae}, we have that $T$ iterations are sufficient for primal-dual accuracy $\epsilon_\gap$, with
\begin{align*}
T
&\geq
T_0 + 
\max\{\Big\lceil \frac K{1-\Theta}\Big\rceil,\frac
{4L^2      }
{\lambda \epsilon_\gap
 (1-\Theta)}\},  
\\
T_0
&\geq t_0+
\left(
\frac{2 K}{1-\Theta}
\left(
\frac
{8L^2       }
{\lambda K \epsilon_\gap}
-1
\right)
\right)_+,
\\
t_0 &\geq 
  \max(0,\big\lceil \tfrac K{1-\Theta}
\log(
 \tfrac{2\lambda (\bD(\alphav^* )-\bD(\vc{\alphav}{0}))
  } {4 K L^2   }
  )
 \big\rceil) 
\end{align*}
Hence the more machines $K$, the more iterations are needed (in the worst case).

\item
(\cocoap Adding, $\aggpar := 1$):
In this case, the choice of $\sigma':=K$ satisfies 
\eqref{eq:sigmaPrimeSafeDefinition}.
Then using $\sigma \le n^2/K$ in light of Remark \ref{rmk:asfwaefwae}, we have that $T$ iterations are sufficient for primal-dual accuracy $\epsilon_\gap$, 
with\begin{align*}
T
&\geq
T_0 + 
\max\{\Big\lceil \frac1{1-\Theta}\Big\rceil,\frac
{4L^2  }
{\lambda  \epsilon_\gap
  (1-\Theta)}\},  
\\
T_0
&\geq t_0+
\left(
\frac{2}{1-\Theta}
\left(
\frac
{8L^2   }
{\lambda  \epsilon_\gap}
-1
\right)
\right)_+,
\\
t_0 &\geq 
  \max(0,\big\lceil \tfrac1{1-\Theta}
\log(
\tfrac{
 2\lambda n  (\bD(\alphav^* )-\bD(\vc{\alphav}{0}))
  }{4 K L^2}
  )
 \big\rceil)
\end{align*}
This is significantly better than the averaging case.
\end{itemize}
\end{corollary}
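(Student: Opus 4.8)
The plan is to prove the corollary as a direct specialization of Theorem~\ref{thm:convergenceNonsmooth}: I substitute each of the two parameter regimes into the general iteration bounds and simplify. No new analysis is required beyond verifying admissibility of the chosen parameters and carrying out the algebra.

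First I would check that the two quoted subproblem parameters are \emph{valid}, i.e. that they satisfy the safe condition \eqref{eq:sigmaPrimeSafeDefinition}. Both choices are instances of the generic bound $\sigma' = \aggpar K$ established in Lemma~\ref{lem:sigmaPrimeNotBad}: setting $\aggpar = \tfrac1K$ gives $\sigma' = 1$ (the \cocoa averaging case) and setting $\aggpar = 1$ gives $\sigma' = K$ (the \cocoap adding case). Thus Lemma~\ref{lem:sigmaPrimeNotBad} immediately certifies admissibility in both regimes, and I never have to revisit the maximum appearing in \eqref{eq:sigmaPrimeSafeDefinition}.

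Next, under the stated hypotheses---normalized data $\|\xv_i\|\le 1$ and a balanced partition $n_k = n/K$---Remark~\ref{rmk:asfwaefwae} supplies the bound $\sigma \le n^2/K$. I would then insert $\sigma = n^2/K$ together with the relevant $(\aggpar,\sigma')$ pair into the three quantities $T$, $T_0$, and $t_0$ of Theorem~\ref{thm:convergenceNonsmooth} and collect terms. The key simplification is that the product $\tfrac{\sigma\sigma'}{n^2}$ equals $\tfrac1K$ in the averaging case and $1$ in the adding case, while $\tfrac1{\aggpar}$ equals $K$ and $1$ respectively; tracking how these factors combine with (or cancel) the explicit $K$-dependence already present in the theorem is precisely what produces the asymmetry between the two displayed sets of bounds.

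The one step demanding genuine care---and the main, if still elementary, obstacle---is the logarithmic term $t_0$, where $\sigma$ and $\sigma'$ sit in the \emph{denominator} inside the $\log$. Because Remark~\ref{rmk:asfwaefwae} furnishes only an \emph{upper} bound on $\sigma$, substituting it there pushes the inequality in the opposite direction to the numerator terms, so I would verify, using monotonicity of $\log$ together with the $\max(0,\cdot)$ truncation and the surrounding ceilings, that the resulting expression still yields a \emph{sufficient} iteration count rather than an underestimate. Once this bookkeeping is complete, the two substitutions produce exactly the two parameter regimes in the statement, and comparing them---the leading $T$-term $\tfrac{4L^2}{\lambda\epsilon_\gap(1-\Theta)}$ is $K$-free under adding but carries an extra factor $K$ in $T_0$ and $t_0$ under averaging---establishes the claimed strong-scaling advantage of \cocoap over \cocoa.
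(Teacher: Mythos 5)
Your proposal is correct and follows exactly the route the paper intends: Corollary~\ref{cor:convergence} is given no separate proof and is obtained by substituting $(\aggpar,\sigma')=(\tfrac1K,1)$ or $(1,K)$ together with the bound $\sigma\le n^2/K$ into the three quantities of Theorem~\ref{thm:convergenceNonsmooth}, with validity of $\sigma'$ certified by Lemma~\ref{lem:sigmaPrimeNotBad}. Your flagged concern about $\sigma$ appearing in the denominator inside the logarithm is legitimate and resolves exactly as you suggest, since Lemma~\ref{lemma:BoundOnR} shows the whole theorem holds verbatim with $\sigma$ replaced by any upper bound on $\vc{R}{t}/(4L^2)$, so substituting $n^2/K$ uniformly is sound.
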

\vspace{-.5em}

In practice, we usually have $\sigma \ll n^2/K$, and hence the actual convergence rate can be much better than the proven worst-case bound.
Table \ref{tbl:Sigma}
shows that the actual value of $\sigma$ is typically between one and two orders of magnitudes smaller compared to our used upper-bound $n^2/K$.
\vspace{-1em} 

\begin{table}[h]
  \centering
  \caption{The ratio of upper-bound $\tfrac{n^2}{K}$ divided by the true value of the parameter~$\sigma$, for some real datasets. }
    \vspace{1mm}
  \label{tbl:Sigma}
  \scriptsize
    \begin{tabular}{rrrrrrr}
    \toprule
    K     & 16    & 32    & 64    & 128   & 256   & 512 \\
    \midrule
    news  & 15.483 & 14.933 & 14.278 & 13.390 & 12.074 & 10.252 \\
    real-sim & 42.127 & 36.898 & 30.780 & 23.814 & 16.965 & 11.835 \\
    rcv1  & 40.138 & 23.827 & 28.204 & 21.792 & 16.339 & 11.099 \\
     \midrule
    K     & 256   & 512   & 1024  & 2048  & 4096  & 8192 \\
    \midrule
    covtype & 17.277 & 17.260 & 17.239 & 16.948 & 17.238 & 12.729 
\\ \bottomrule
    \end{tabular} 
\end{table}%
    \vspace{-1mm}

\iffalse
\begin{table}[htbp]
  \centering
  \caption{$K\aggpar/\sigma'_{\min} $}
  \label{tbl:Sigma'}
  \scriptsize
    \begin{tabular}{rrrrrrr}
    \toprule
 %
 %
 %
 %
 %
 %
 %
 %
 %
 %
 %
 %
 %
%
    K     & 16    & 32    & 64    & 128   & 256   & 512 \\
    \midrule
    news  & 1.0808 & 1.1264 & 1.1770 & 1.2553 & 1.3838 & 1.6036 \\
    real-sim & 1.5096 & 1.6114 & 1.7430 & 2.0018 & 2.3918 & 3.0307 \\
    rcv1  & 1.0887 & 1.1842 & 1.3778 & 1.6908 & 2.2273 & 2.9938 \\
 %
    \midrule
    K     & 256   & 512   & 1024  & 2048  & 4096  & 8192 \\
    \midrule
    covtype & 1.0000 & 1.0000 & 1.0001 & 1.0001 & 1.0005 & 1.0042 \\
    webspam & 1.0000 & 1.0002 & 1.0006 & 1.0001 & 1.0006 & 1.0011 \\
 \bottomrule
    \end{tabular}%
\end{table}%
 \fi

\subsection{Primal-Dual Convergence for Smooth Losses}

The following theorem shows the convergence for smooth losses, in terms of the objective as well as primal-dual gap.

\begin{theorem}
\label{thm:convergenceSmoothCase}
Assume the loss functions functions 
$\ell_i$ are $(1/\mu)$-smooth $\forall i\in[n]$.
We define $\sigma_{\max} = 
\max_{k\in[K]} \sigma_k$. Then after $T$ iterations of Algorithm \ref{alg:cocoa}, with\vspace{-1mm}
$$
 T
    \geq 
\tfrac{1}
   {\aggpar
(1-\Theta)}
\tfrac
{\lambda\mu n+
\sigma_{\max} \sigma'}
{ \lambda\mu n }
    \log \tfrac1{\epsilon_\bD} , \vspace{-1mm}
$$
it holds that\vspace{-3mm}
$$\Exp[\bD(\alphav^*)
  - \bD(\vc{\alphav}{T})]
   \leq \epsilon_\bD.$$
Furthermore, after $T$ iterations with\vspace{-1mm}
$$
 T 
    \geq 
\tfrac{1}
   {\aggpar
(1-\Theta)}
\tfrac
{\lambda\mu n+
\sigma_{\max} \sigma'}
{ \lambda\mu n }
    \log 
\left(
\tfrac{1}
   {\aggpar
(1-\Theta)}
\tfrac
{\lambda\mu n+
\sigma_{\max} \sigma'}
{ \lambda\mu n }
    \tfrac1{\epsilon_\gap}
    \right)  ,
$$
we have the expected duality gap
$$
\Exp[
\bP( \wv(\vc{\alphav}{T})) - \bD(\vc{\alphav}{T})
]\leq \epsilon_\gap.
$$
\end{theorem}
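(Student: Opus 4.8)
The plan is to specialize the master inequality of Lemma~\ref{lem:basicLemma} to the strongly-convex-conjugate setting and then choose the free parameter $s\in[0,1]$ so that the error term $\vc{R}{t}$ becomes harmless, turning the one-step bound into a geometric contraction of the dual suboptimality. Since each $\ell_i$ is $(1/\mu)$-smooth, its conjugate $\ell_i^*$ is $\mu$-strongly convex, so I may invoke Lemma~\ref{lem:basicLemma} with a genuine $\mu>0$; the point is that the negative $-\tfrac{\lambda\mu n(1-s)}{\sigma' s}\|\vc{\uv}{t}-\vc{\alphav}{t}\|^2$ contribution in \eqref{eq:defOfR} now has real bite.

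First I would control $\vc{R}{t}$. Using the definition \eqref{eq:definitionOfSigmaK} of $\sigma_k$ together with $\sigma_{\max}=\max_k\sigma_k$ and the fact that the partition $\{\mathcal P_k\}$ splits the squared norm additively, I get $\sum_{k=1}^K\|A\vsubset{(\vc{\uv}{t}-\vc{\alphav}{t})}{k}\|^2\le \sigma_{\max}\|\vc{\uv}{t}-\vc{\alphav}{t}\|^2$. Substituting into \eqref{eq:defOfR} gives $\vc{R}{t}\le\big(\sigma_{\max}-\tfrac{\lambda\mu n(1-s)}{\sigma' s}\big)\|\vc{\uv}{t}-\vc{\alphav}{t}\|^2$. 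The natural move is to pick $s$ so the bracket is nonpositive; the largest admissible such value is
$$s:=\frac{\lambda\mu n}{\lambda\mu n+\sigma_{\max}\sigma'}\in[0,1],$$
for which $\vc{R}{t}\le 0$ at every $t$. With this $s$, Lemma~\ref{lem:basicLemma} collapses to $\Exp[\bD(\vc{\alphav}{t+1})-\bD(\vc{\alphav}{t})]\ge \rho\,\gap(\vc{\alphav}{t})$, where I abbreviate $\rho:=\aggpar(1-\Theta)s$, and I note that $1/\rho$ equals exactly the leading constant $\tfrac{1}{\aggpar(1-\Theta)}\tfrac{\lambda\mu n+\sigma_{\max}\sigma'}{\lambda\mu n}$ appearing in the theorem.

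For the dual rate, I would use weak duality $\gap(\vc{\alphav}{t})\ge \bD(\alphav^*)-\bD(\vc{\alphav}{t})$ to replace the gap by the dual suboptimality $\epsilon_\bD^{(t)}:=\bD(\alphav^*)-\bD(\vc{\alphav}{t})$, obtaining $\Exp[\epsilon_\bD^{(t+1)}]\le(1-\rho)\Exp[\epsilon_\bD^{(t)}]$ after rearranging and taking total expectations (the tower property absorbing the local solver's randomness). Iterating yields $\Exp[\epsilon_\bD^{(T)}]\le(1-\rho)^T\epsilon_\bD^{(0)}\le e^{-\rho T}\epsilon_\bD^{(0)}$. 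Here I use $\epsilon_\bD^{(0)}=\bD(\alphav^*)-\bD(\0)\le 1$: indeed $\bD(\alphav^*)\le\bP(\wv^*)\le\bP(\0)=\tfrac1n\sum_i\ell_i(0)\le1$ by \eqref{eq:afswfevfwaefa}, while $\bD(\0)=-\tfrac1n\sum_i\ell_i^*(0)\ge0$ since $\ell_i\ge0$ forces $\ell_i^*(0)=-\inf_a\ell_i(a)\le0$. Requiring $e^{-\rho T}\le\epsilon_\bD$ then gives precisely $T\ge\tfrac1\rho\log\tfrac1{\epsilon_\bD}$.

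For the duality-gap rate I would instead keep the gap explicit: rearranging $\Exp[\bD(\vc{\alphav}{t+1})-\bD(\vc{\alphav}{t})]\ge\rho\,\gap(\vc{\alphav}{t})$ gives $\rho\,\Exp[\gap(\vc{\alphav}{t})]\le\Exp[\epsilon_\bD^{(t)}-\epsilon_\bD^{(t+1)}]\le\Exp[\epsilon_\bD^{(t)}]$, whence $\Exp[\gap(\vc{\alphav}{T})]\le\tfrac1\rho\Exp[\epsilon_\bD^{(T)}]\le\tfrac1\rho e^{-\rho T}$. Imposing $\tfrac1\rho e^{-\rho T}\le\epsilon_\gap$ rearranges to $T\ge\tfrac1\rho\log\tfrac1{\rho\epsilon_\gap}$, which is exactly the stated requirement once $1/\rho$ is expanded. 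I expect the main obstacle to be bookkeeping rather than conceptual: verifying that the maximal $s$ annihilating the $\vc{R}{t}$-bracket is the one quoted and lies in $[0,1]$, and carrying the $1/\rho$ \emph{prefactor} (not merely the exponent) through the gap argument so that the logarithm's argument emerges as $\tfrac1{\rho\epsilon_\gap}$ rather than $\tfrac1{\epsilon_\gap}$. The strong-convexity input and the initialization bound $\epsilon_\bD^{(0)}\le1$ are the only points where smoothness and the loss normalization genuinely enter.
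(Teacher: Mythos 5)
Your proposal is correct and follows essentially the same route as the paper's proof: bound $\vc{R}{t}$ via $\sigma_{\max}$, choose $s=\tfrac{\lambda\mu n}{\lambda\mu n+\sigma_{\max}\sigma'}$ to make $\vc{R}{t}\le 0$, obtain the geometric contraction of the dual suboptimality with initialization bound $\bD(\alphav^*)-\bD(\0)\le 1$, and then carry the $1/\rho$ prefactor into the gap bound to get the $\log\tfrac1{\rho\epsilon_\gap}$ requirement. No gaps to report.
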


\vspace{-2mm}
The following corollary is analogous to Corollary \ref{cor:convergence}, but for the case of smooth loses. %
It  again shows that while the \cocoa variant degrades with the increase of the number of machines $K$, the $\cocoap$ rate is independent of $K$.

\begin{corollary}\label{cor:convergenceSmooth}
Assume that 
all datapoints $\xv_i$ are bounded as $\|\xv_i\|\leq 1$
and that 
the data partition is balanced, i.e., that 
$n_k = n/K$ for all $k$.
We again consider the same two different possible choices of the aggregation parameter~$\aggpar$: \vspace{-3mm}
\begin{itemize}
\item 
(\cocoa Averaging, $\aggpar := \frac1K$):
In this case, $\sigma':=1$ is a valid choice which satisfies 
\eqref{eq:sigmaPrimeSafeDefinition}.
Then using $\sigma_{\max} \le n_k = n/K$ in light of Remark \ref{rmk:asfwaefwae}, we have that $T$ iterations are sufficient for suboptimality %
$\epsilon_\bD$, with
\begin{align*}
T
&\geq
\tfrac{1}
   {1-\Theta}
\tfrac
{\lambda\mu K +
1}
{ \lambda\mu }
    \log \tfrac1{\epsilon_\bD}
\end{align*}
Hence the more machines $K$, the more iterations are needed (in the worst case).
\vspace{-2mm}
\item
(\cocoap Adding, $\aggpar := 1$):
In this case, the choice of $\sigma':=K$ satisfies 
\eqref{eq:sigmaPrimeSafeDefinition}.
Then using $\sigma_{\max} \le n_k = n/K$ in light of Remark \ref{rmk:asfwaefwae}, we have that $T$ iterations are sufficient for suboptimality %
$\epsilon_\bD$, with
\begin{align*}
T
&\geq
\tfrac{1}{1-\Theta}
\tfrac
{\lambda\mu+1}
{ \lambda\mu }
    \log \tfrac1{\epsilon_\bD}
\end{align*}
This is significantly better than the averaging case.
Both rates hold analogously for the duality gap.

\end{itemize}
\end{corollary}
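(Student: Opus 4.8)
The plan is to obtain both rates as immediate specializations of Theorem~\ref{thm:convergenceSmoothCase}, whose dual-suboptimality bound is
\[
T \geq \frac{1}{\aggpar(1-\Theta)}\cdot\frac{\lambda\mu n + \sigma_{\max}\sigma'}{\lambda\mu n}\,\log\tfrac1{\epsilon_\bD}.
\]
The entire argument reduces to inserting the values of $\sigma'$ and the bound on $\sigma_{\max}$ appropriate to each choice of $\aggpar$, and then simplifying. There is no genuinely difficult step; the work is purely algebraic, and the only points requiring care are confirming the admissibility of the claimed $\sigma'$ and bounding $\sigma_{\max}$ correctly.

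First I would justify the two choices of $\sigma'$. By Lemma~\ref{lem:sigmaPrimeNotBad}, the value $\sigma':=\aggpar K$ always satisfies the safety condition~\eqref{eq:sigmaPrimeSafeDefinition}. Setting $\aggpar=\frac1K$ gives $\sigma'=1$ and $\aggpar=1$ gives $\sigma'=K$, which are precisely the two values used in the corollary, so no verification beyond invoking that lemma is needed. Next I would bound $\sigma_{\max}$: under the normalization $\|\xv_i\|\le1$ and the balanced partition $n_k=n/K$, Remark~\ref{rmk:asfwaefwae} yields $\sigma_k\le n_k=n/K$ for every $k$, hence $\sigma_{\max}=\max_{k}\sigma_k\le n/K$.

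With these two facts the crucial product simplifies in each regime: for averaging $\sigma_{\max}\sigma'\le(n/K)\cdot1=n/K$, and for adding $\sigma_{\max}\sigma'\le(n/K)\cdot K=n$. Substituting into the displayed bound and cancelling the common factor $n$ between numerator and denominator gives
\[
\frac{1}{\frac1K(1-\Theta)}\cdot\frac{\lambda\mu n + n/K}{\lambda\mu n} = \frac{1}{1-\Theta}\cdot\frac{\lambda\mu K+1}{\lambda\mu}
\]
in the averaging case, and
\[
\frac{1}{1-\Theta}\cdot\frac{\lambda\mu n+n}{\lambda\mu n} = \frac{1}{1-\Theta}\cdot\frac{\lambda\mu+1}{\lambda\mu}
\]
in the adding case, which match the two stated rates.

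Finally, the duality-gap claim follows from the second bound of Theorem~\ref{thm:convergenceSmoothCase} under the identical substitutions; the only additional feature is the factor $\frac{1}{\aggpar(1-\Theta)}\frac{\lambda\mu n+\sigma_{\max}\sigma'}{\lambda\mu n}$ that now appears inside the logarithm, which after the same simplification is $O(K)$ for averaging and $O(1)$ for adding, and hence only perturbs the $\log$ by lower-order terms. The real content of the corollary is the contrast it exposes --- the averaging rate carries the factor $\lambda\mu K+1$ and so grows linearly with $K$, whereas the adding rate is completely $K$-free --- and this qualitative effect is exactly what the choice of $\aggpar$, through the product $\sigma_{\max}\sigma'$, controls.
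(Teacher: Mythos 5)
Your proposal is correct and follows exactly the route the paper intends (the paper gives no separate proof of this corollary, treating it as a direct specialization of Theorem~\ref{thm:convergenceSmoothCase}): justify $\sigma'=\aggpar K$ via Lemma~\ref{lem:sigmaPrimeNotBad}, bound $\sigma_{\max}\le n/K$ via Remark~\ref{rmk:asfwaefwae}, and substitute into the iteration bound, noting that an upper bound on $\sigma_{\max}\sigma'$ yields a sufficient condition on $T$. The algebra in both cases checks out, so there is nothing to add.
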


\subsection{Comparison with Original \cocoa}

\begin{remark}%

If we choose averaging
($\aggpar :=\frac1K$) for aggregating the updates, together with $\sigma' := 1$,
then the resulting Algorithm \ref{alg:cocoa} is identical to \cocoa analyzed in \cite{jaggi2014communication}. 
However, they only provide convergence for smooth loss functions $\ell_i$ and have guarantees for dual sub-optimality and not the duality gap. Formally, when $\sigma' = 1$, the subproblems \eqref{eq:subproblem} will differ from the original dual $\bD(.)$ only by an additive constant, which does not affect the local optimization algorithms used within \cocoa.
\end{remark}

\section{SDCA as an Example Local Solver}

We have shown convergence rates for Algorithm \ref{alg:cocoa}, depending solely on the approximation quality $\Theta$ of the used local solver (Assumption~\ref{asm:THeta}).
Any chosen local solver in each round receives the local $\alphav$ variables as an input, as well as a shared vector $\wv \overset{\eqref{eq:PDMapping}}{=} \wv(\alphav )$ being compatible with the last state of all global $\alphav\in \R^n$ variables.

As an illustrative example for a local solver, Algorithm \ref{alg:localSDCA} below summarizes randomized coordinate ascent (SDCA) applied on the local subproblem \eqref{eq:subproblem}.
The following two Theorems
(\ref{thm:LocalSDCA_smooth2},
\ref{thm:LocalSDCA_smooth1})
characterize the local convergence %
for both smooth and non-smooth  functions. In all the results we will use
 $r_{\max} := \max_{i \in [n]} \|\xv_i\|^2$.
 
\begin{algorithm}[h]
\caption{\localSDCA$(\wv, \vsubset{\alphav}{k}, k, H)$}
\label{alg:localSDCA}

\begin{algorithmic}[1]
\STATE {\bf Input:} 
$\vsubset{\alphav}{k}, \wv=\wv(\alphav)$
\STATE {\bf Data:} Local $\{(\xv_i, y_i)\}_{ i \in \mathcal{P}_k}$
\STATE {\bf Initialize:}   $\vc{\Delta \alphav_{[k]}}{0} := \0 \in \mathbb R^{n}$
\FOR {$h = 0, 1, \dots ,H-1$} %
  \STATE choose $i\in \mathcal{P}_k$ uniformly at random
  \STATE 
  $\displaystyle
  \delta^*_i
 := \argmax_{\delta_i \in \R} \,
 \Ggk(  
 \vc{\vsubset{\Delta \alphav}{k}}{h}
 + \delta_i \ev_i; \wv, \vsubset{\alphav}{k})$
  \STATE  
  $\vsubset{
  \Delta\alphav^{(h+1)}}{k} := \vsubset{
  \Delta\alphav^{(h)}}{k} + \delta^*_i \ev_i$
\ENDFOR 
\STATE {\bf Output:} $\Delta\alphav_{[k]}^{(H)}$ %
\end{algorithmic}
\end{algorithm}
 
\begin{theorem}
\label{thm:LocalSDCA_smooth2}
Assume the functions $\ell_i$ are $(1/\mu)-$smooth for $i\in[n]$.
Then Assumption~\ref{asm:THeta} on the  local approximation quality $\Theta$ is satisfied
for \localSDCA as given in Algorithm \ref{alg:localSDCA}, if we
choose the number of inner iterations $H$ as \vspace{-1mm}
\begin{equation}
\label{eq:asjfwjfdwafcea}
H \geq n_k \frac{\sigma' r_{\max} + \lambda n \mu}{\lambda n \mu} \log \frac1{\Theta} . \vspace{-1mm}
\end{equation}
\end{theorem}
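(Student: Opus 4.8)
The plan is to recognize that \localSDCA is exactly randomized coordinate ascent on the single concave maximization problem $\max_{\vsubset{\Delta \alphav}{k}} \Ggk(\vsubset{\Delta \alphav}{k}; \wv, \vsubset{\alphav}{k})$, whose only free coordinates are those indexed by $\mathcal{P}_k$ (so the effective dimension is $n_k$), and then to invoke the standard geometric convergence rate for coordinate ascent on a smooth, strongly concave objective. First I would establish the two structural properties of $\Ggk$ as a function of $\vsubset{\Delta \alphav}{k}$: (i) because each $\ell_i$ is $(1/\mu)$-smooth, each conjugate $\ell_i^*$ is $\mu$-strongly convex, so the separable term $-\frac1n\sum_{i\in\mathcal{P}_k}\ell_i^*(-\alpha_i - (\vsubset{\Delta \alphav}{k})_i)$ is $\frac{\mu}{n}$-strongly concave in each coordinate; and (ii) the coupling term $-\frac{\sigma'}{2\lambda n^2}\|A\vsubset{\Delta \alphav}{k}\|^2$ is concave with coordinate-wise curvature controlled by the diagonal entries $\|\xv_i\|^2 \le r_{\max}$ of $A^\top A$, giving a per-coordinate gradient-Lipschitz constant of at most $\frac{\sigma' r_{\max}}{\lambda n^2}$ on top of the $\frac{\mu}{n}$ from the conjugate term.

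The per-step analysis then follows the usual route. Fixing the current iterate $\vc{\vsubset{\Delta \alphav}{k}}{h}$ and choosing coordinate $i\in\mathcal{P}_k$ uniformly, the exact maximizing step $\delta^*_i$ improves $\Ggk$ by at least as much as the trial step $\delta_i = s\big((\vsubset{\Delta \alphav^*}{k})_i - (\vc{\vsubset{\Delta \alphav}{k}}{h})_i\big)$ for any $s\in[0,1]$. Plugging this trial step into the quadratic lower model afforded by the smoothness in (ii), taking expectation over the uniform choice of $i$ (which contributes a factor $\frac1{n_k}$), and using the strong concavity from (i) to relate $\sum_i \big((\vsubset{\Delta \alphav^*}{k})_i - (\cdot)_i\big)^2$ to the suboptimality gap, I would obtain an inequality of the form
\[
\Exp\big[\Ggk(\vc{\vsubset{\Delta \alphav}{k}}{h+1}) - \Ggk(\vc{\vsubset{\Delta \alphav}{k}}{h})\big]
\ \ge\ \tfrac1{n_k}\Big(s\,\mathrm{gap}_h - \tfrac{s^2}{2}C\Big),
\]
where $\mathrm{gap}_h := \Ggk(\vsubset{\Delta \alphav^*}{k}) - \Ggk(\vc{\vsubset{\Delta \alphav}{k}}{h})$ and $C$ collects the curvature constants. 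Choosing $s$ so that the strongly-concave term exactly offsets the quadratic penalty yields a clean one-step contraction with factor $1 - \frac1{n_k}\frac{\lambda n\mu}{\lambda n\mu + \sigma' r_{\max}}$ on $\mathrm{gap}_h$.

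Finally I would iterate this one-step contraction from the initialization $\vc{\vsubset{\Delta \alphav}{k}}{0} = \0$, giving
\[
\Exp[\mathrm{gap}_H] \le \Big(1 - \tfrac1{n_k}\tfrac{\lambda n\mu}{\lambda n\mu + \sigma' r_{\max}}\Big)^H \big(\Ggk(\vsubset{\Delta \alphav^*}{k}) - \Ggk(\0)\big),
\]
and then bound $(1-x)^H \le e^{-xH}$. Requiring the right-hand exponential to fall below $\Theta$ with $x = \frac1{n_k}\frac{\lambda n\mu}{\lambda n\mu + \sigma' r_{\max}}$ gives exactly $H \ge n_k\frac{\sigma' r_{\max} + \lambda n\mu}{\lambda n\mu}\log\frac1{\Theta}$, the claimed bound, and since the initialization is $\0$ this contraction is precisely the form required by Assumption~\ref{asm:THeta}. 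The main obstacle I anticipate is item (ii): correctly extracting the per-coordinate smoothness constant of the coupling term and verifying that the $\sigma'$ scaling in \eqref{eq:subproblem} propagates cleanly into the $\sigma' r_{\max}$ factor, while balancing it against the $\frac{\mu}{n}$ strong concavity so that the optimal $s$ reproduces the stated ratio. Getting the bookkeeping of the $\frac1n$ versus $\frac1{n_k}$ normalizations and the diagonal bound $\|\xv_i\|^2\le r_{\max}$ right is where the argument is most delicate.
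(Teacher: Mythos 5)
Your proposal is correct and follows essentially the same route as the paper: the paper likewise views \localSDCA as randomized coordinate descent on the composite function $F=\Phi+f$ with $\Phi$ the separable conjugate part ($\tfrac{\mu}{n}$-strongly convex) and $f$ the quadratic part with coordinate-wise Lipschitz constant $\tfrac{\sigma'}{\lambda n^2}r_{\max}$, obtaining the per-step contraction factor $1-\tfrac{1}{n_k}\tfrac{\lambda n\mu}{\sigma' r_{\max}+\lambda n\mu}$ and then solving for $H$. The only difference is that the paper invokes Theorem~20 of the cited coordinate-descent reference for the one-step bound, whereas you rederive it via the interpolated trial step $\delta_i=s\bigl((\vsubset{\Delta\alphav^*}{k})_i-(\cdot)_i\bigr)$; that is the same underlying argument, just inlined.
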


\begin{theorem}
\label{thm:LocalSDCA_smooth1}
Assume the functions $\ell_i$ are $L$-Lipschitz for $i\in[n]$.
Then Assumption~\ref{asm:THeta} on the local approximation quality $\Theta$ is satisfied
for \localSDCA as given in Algorithm \ref{alg:localSDCA}, if we
choose the number of inner iterations $H$ as  \vspace{-1mm}
\begin{equation}
\label{eq:H_convexLoss}
H \geq   n_k   
 \bigg( 
\frac{1-\Theta}{\Theta  } 
  +
  \frac{\sigma'r_{\max}}
       {2\Theta \lambda n^2}        
\frac{\| \vsubset{\Delta \alphav^*}{k}\|^2}
{  
\Ggk( \vsubset{\Delta \alphav^*}{k};.) %
-   \Ggk( {\bf 0};.)
}
 \bigg) .
\end{equation}
\end{theorem}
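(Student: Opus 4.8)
The plan is to reduce Assumption~\ref{asm:THeta} to a suboptimality statement about coordinate ascent on the single concave function $\Ggk$, and then run a one-coordinate-at-a-time progress analysis adapted to the fact that, for $L$-Lipschitz losses, the conjugates $\ell_i^*$ carry no strong convexity (so effectively $\mu=0$ and only a sublinear $1/H$ rate is available). Writing $\epsilon^{(h)} := \Ggk(\vsubset{\Delta\alphav^*}{k};.) - \Ggk(\vc{\vsubset{\Delta\alphav}{k}}{h};.)$ for the suboptimality after $h$ inner steps, with $\epsilon^{(0)} = \Ggk(\vsubset{\Delta\alphav^*}{k};.) - \Ggk(\0;.)$, the goal is precisely to choose $H$ so that $\Exp[\epsilon^{(H)}] \le \Theta\,\epsilon^{(0)}$, which is the statement of \eqref{eq:localSolutionQuality}.

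First I would derive a per-step expected-ascent inequality. Since \localSDCA maximizes $\Ggk$ exactly along the randomly chosen coordinate $i\in\mathcal{P}_k$, its progress dominates that of the feasible trial update that moves coordinate $i$ by $s$ times its gap to the optimum, i.e.\ $\delta_i = s\big((\vsubset{\Delta\alphav^*}{k})_i-(\vc{\vsubset{\Delta\alphav}{k}}{h})_i\big)$, for any $s\in[0,1]$. Bounding the separable conjugate terms $-\tfrac1n\ell_i^*(\cdot)$ from below by convexity of $\ell_i^*$, and expanding the explicit quadratic penalty $-\tfrac{\sigma'}{2\lambda n^2}\|A\vsubset{\Delta\alphav}{k}\|^2$ (whose per-coordinate curvature is at most $\tfrac{\sigma' r_{\max}}{\lambda n^2}$), then averaging over the uniform choice of $i$, I expect to obtain
\[
\Exp[\epsilon^{(h)}-\epsilon^{(h+1)}\mid \vc{\vsubset{\Delta\alphav}{k}}{h}]
\ \ge\
\frac{s}{n_k}\,\epsilon^{(h)}
-\frac{s^2}{n_k}\,\frac{\sigma' r_{\max}}{2\lambda n^2}\,\big\|\vsubset{\Delta\alphav^*}{k}-\vc{\vsubset{\Delta\alphav}{k}}{h}\big\|^2 .
\]

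Next I would eliminate the distance term. Because the loss is Lipschitz rather than strongly convex, there is no strong concavity to convert $\|\vsubset{\Delta\alphav^*}{k}-\vc{\vsubset{\Delta\alphav}{k}}{h}\|^2$ into suboptimality, so instead I would bound it uniformly, using that the iterates and the optimum stay in the bounded feasible region $\dom\ell_i^*\subseteq[-L,L]$ together with the initialization $\vc{\vsubset{\Delta\alphav}{k}}{0}=\0$, in order to replace it by the constant $\|\vsubset{\Delta\alphav^*}{k}\|^2$. Substituting and writing $B := \tfrac{\sigma' r_{\max}}{2\lambda n^2}\|\vsubset{\Delta\alphav^*}{k}\|^2$ yields the recursion $\Exp[\epsilon^{(h+1)}] \le \big(1-\tfrac{s}{n_k}\big)\Exp[\epsilon^{(h)}] + \tfrac{s^2}{n_k}B$, valid for every $s\in[0,1]$ and every $h$.

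Finally I would unroll this recursion and tune the step size. Choosing $s$ appropriately (constant while $\epsilon^{(h)}$ is large, then shrinking as the suboptimality approaches the $s^2B$ floor) produces a $1/h$-type decay whose accumulated constant is governed by $B$; solving $\Exp[\epsilon^{(H)}]\le\Theta\,\epsilon^{(0)}$ then gives $H \ge n_k\big(\tfrac{1-\Theta}{\Theta}+\tfrac{B}{\Theta\,\epsilon^{(0)}}\big)$, which is exactly \eqref{eq:H_convexLoss} after substituting $B$ and $\epsilon^{(0)}=\Ggk(\vsubset{\Delta\alphav^*}{k};.)-\Ggk(\0;.)$. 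The main obstacle is the third step: without strong concavity the naive telescoping stalls at the $s^2B$ floor, so the delicate part is justifying the uniform bound on $\|\vsubset{\Delta\alphav^*}{k}-\vc{\vsubset{\Delta\alphav}{k}}{h}\|^2$ and scheduling $s$ so that the two competing terms combine into the clean $\tfrac{1-\Theta}{\Theta}$-plus-$\tfrac{B}{\Theta\epsilon^{(0)}}$ form rather than a looser bound.
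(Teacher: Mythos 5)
Your step 2 (the per-coordinate expected-ascent inequality obtained by comparing the exact coordinate maximization against the trial move $s\bigl((\vsubset{\Delta \alphav^*}{k})_i-(\vc{\vsubset{\Delta \alphav}{k}}{h})_i\bigr)$, using convexity of $\ell_i^*$ and the coordinate-wise curvature $\tfrac{\sigma' r_{\max}}{\lambda n^2}$ of the quadratic term) is the standard and correct starting point, and your final algebra from a bound of the form $\Exp[\epsilon^{(H)}]\le \tfrac{n_k}{n_k+H}\bigl(\epsilon^{(0)}+B\bigr)$ to \eqref{eq:H_convexLoss} is right. The genuine gap is in step 3. The inequality $\|\vsubset{\Delta \alphav^*}{k}-\vc{\vsubset{\Delta \alphav}{k}}{h}\|^2\le\|\vsubset{\Delta \alphav^*}{k}\|^2$ holds only at $h=0$; coordinate ascent does not keep the distance to the maximizer monotone, so it can fail for $h\ge1$. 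The fallback you suggest --- boundedness of $\dom\ell_i^*\subseteq[-L,L]$ --- only yields the uniform bound $\|\vsubset{\Delta \alphav^*}{k}-\vc{\vsubset{\Delta \alphav}{k}}{h}\|^2\le 4L^2 n_k$ (as in the proof of Lemma~\ref{lemma:BoundOnR}), which would replace $\|\vsubset{\Delta \alphav^*}{k}\|^2$ in \eqref{eq:H_convexLoss} by the generally much larger $4L^2 n_k$ and hence proves a different, weaker theorem, not the stated one.

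What is actually needed is the sharper sublinear rate for randomized coordinate descent on a composite convex (not strongly convex) objective, namely $\Exp\bigl[\Ggk(\vsubset{\Delta \alphav^*}{k};.)-\Ggk(\vc{\vsubset{\Delta \alphav}{k}}{h};.)\bigr]\le\tfrac{n_k}{n_k+h}\bigl(\Ggk(\vsubset{\Delta \alphav^*}{k};.)-\Ggk(\0;.)+\tfrac12\tfrac{\sigma' r_{\max}}{\lambda n^2}\|\vsubset{\Delta \alphav^*}{k}\|^2\bigr)$, in which the additive constant involves only the \emph{initial} distance $\|\0-\vsubset{\Delta \alphav^*}{k}\|^2$. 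This is exactly what the paper invokes (Theorem~3 of \cite{TTR:IMPROVRED}, after identifying $F(\zetav)=-\Ggk(\sum_{i\in\mathcal{P}_k}\zeta_i\ev_i;\wv,\vsubset{\alphav}{k})$ as a composite function whose smooth part has coordinate-wise Lipschitz constant $\tfrac{\sigma'}{\lambda n^2}r_{\max}$); the theorem's proof controls the trajectory distance through a Lyapunov/induction argument that trades it off against the accumulated function decrease, rather than bounding it uniformly per step. That replacement of the trajectory-dependent distance by the initial one is the missing idea; once you have the displayed $\tfrac{n_k}{n_k+h}$ bound, \eqref{eq:H_convexLoss} follows by the one-line computation you already describe. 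Your proposed step-size scheduling in step 4 is also unnecessary in that route: the cited bound is proved with the comparison point $\vc{\vsubset{\Delta \alphav}{k}}{h}+s(\vsubset{\Delta \alphav^*}{k}-\vc{\vsubset{\Delta \alphav}{k}}{h})$ optimized inside the induction, so no explicit shrinking schedule is required.
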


\iffalse
\begin{theorem}
\label{thm:LocalSDCA_smoothShai}
TODO
\todo[inline]{
one more theorem where $H$ doesn't depend on $\vsubset{\Delta \alphav^*}{k}$
}
\end{theorem}
\fi

\begin{remark}
Between the different regimes allowed in \cocoap (ranging between averaging and adding the updates) the computational cost for obtaining the required local approximation quality varies with the choice of $\sigma'$.
From the above worst-case upper bound, we note that the cost can increase with $\sigma'$, as aggregation becomes more aggressive. %
However, as we will see in the practical experiments in Section \ref{sec:experiments} below, the additional cost is negligible compared to the gain in speed from the different aggregation, when measured on real datasets.
\end{remark}

\section{Discussion and Related Work}
\label{sec:relatedWork}

\paragraph{SGD-based Algorithms.}
For the empirical loss minimization problems of interest here, stochastic subgradient descent (SGD) based methods are well-established.
Several distributed variants of SGD have been proposed, many of which build on the idea of a parameter server \cite{Niu:2011wo,Liu:2014wj,Duchi:2013te}. %
The downside of this approach, even when carefully implemented, is that the amount of required communication is equal to the amount of data read locally (e.g., mini-batch SGD with a batch size of 1 per worker). These variants are in practice not competitive with the more communication-efficient methods considered here, which allow more local updates per round.

\vspace{-1mm}
\paragraph{One-Shot Communication Schemes.}
At the other extreme, there are distributed methods using only a single round of communication, such as \cite{Zhang:2013wq, Zinkevich:2010tj,Mann:2009tr,McWilliams:2014tl}. %
These require additional assumptions on the partitioning of the data, and furthermore can not guarantee convergence to the optimum solution for all regularizers, as shown in, e.g., \cite{DANE}. \cite{Balcan:2012tc} shows additional relevant lower bounds on the minimum number of communication rounds necessary for a given approximation quality for similar machine learning problems.

\vspace{-1mm}
\paragraph{Mini-Batch Methods.} Mini-batch methods are more flexible and lie within these two communication vs. computation extremes. However,
mini-batch versions of both SGD and coordinate descent (CD) \cite{richtarik2013distributed,MinibatchASDCA,Yang:2013vl, ALPHA, QUARTZ} suffer from their convergence rate degrading towards the rate of batch gradient descent as the size of the mini-batch is increased. 
This follows because mini-batch updates are made based on the outdated previous parameter vector $\wv$, in contrast to methods that allow immediate local updates like \cocoa.
Furthermore, the aggregation parameter for mini-batch methods is harder to tune, as it can lie anywhere in the order of mini-batch size.
In the \cocoa setting, the parameter lies in the smaller range given by $K$. 
Our \cocoap extension avoids needing to tune this parameter entirely, by adding.

\newcommand{\smalltrimfig}[1]{\subfigure{\includegraphics[trim = 30 180 30 180, clip, width=.246\linewidth]{#1}}}

\begin{figure*}[t!]
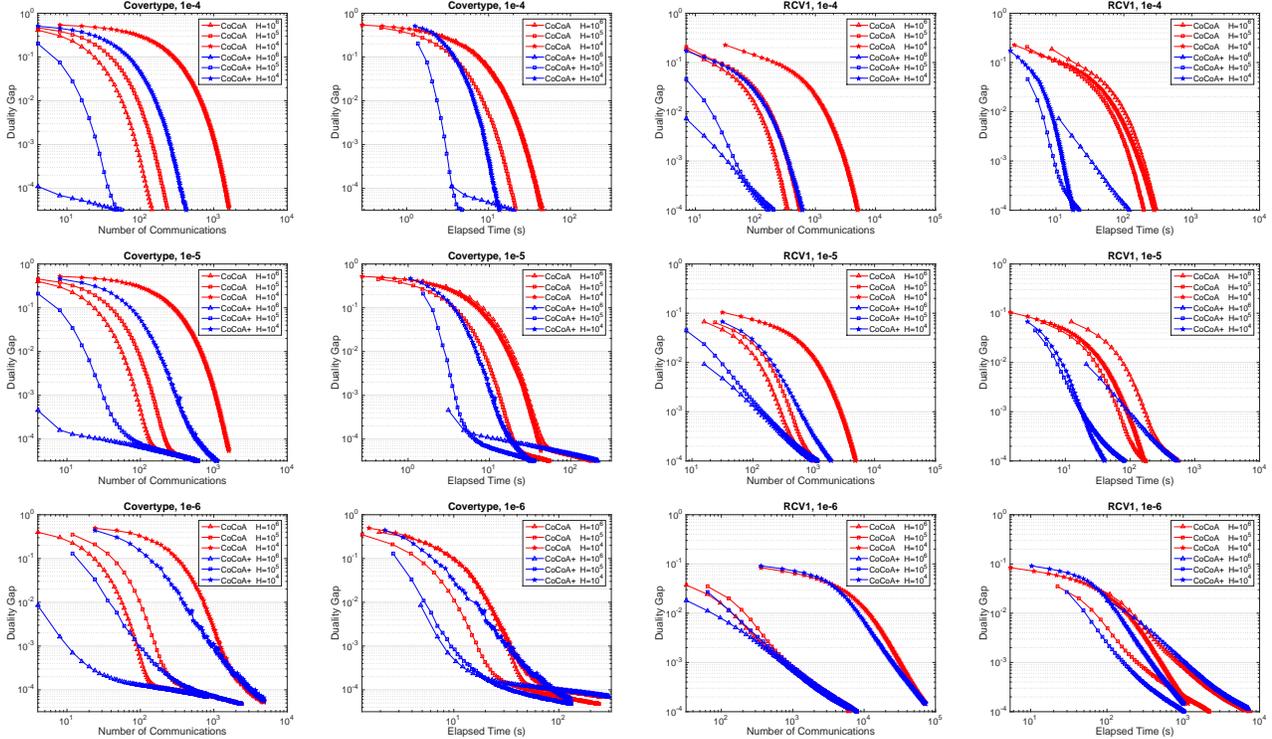

\smalltrimfig{cov_1e-4_comm.pdf}  \vspace{-1em}
\smalltrimfig{cov_1e-4_time.pdf}
\smalltrimfig{rcv1_1e-4_comm.pdf}
\smalltrimfig{rcv1_1e-4_time.pdf} \vspace{-1em}
\smalltrimfig{cov_1e-5_comm.pdf}
\smalltrimfig{cov_1e-5_time.pdf}
\smalltrimfig{rcv1_1e-5_comm.pdf}
\smalltrimfig{rcv1_1e-5_time.pdf} %
\smalltrimfig{cov_1e-6_comm.pdf}
\smalltrimfig{cov_1e-6_time.pdf}
\smalltrimfig{rcv1_1e-6_comm.pdf}
\smalltrimfig{rcv1_1e-6_time.pdf}
\vspace{-1.9em}
\caption{Duality gap vs. the number of communicated vectors, as well as duality gap vs. elapsed time in seconds for two datasets: Covertype (left, $K$=4) and RCV1 (right, $K$=8). Both are shown on a log-log scale, and for three different values of regularization ($\lambda$=1e-4; 1e-5; 1e-6). Each plot contains a comparison of \cocoa (red) and \cocoap (blue), for three different values of $H$, the number of local iterations performed per round. For all plots, across all values of $\lambda$ and $H$, we see that \cocoap converges to the optimal solution faster than \cocoa, in terms of both the number of communications and the elapsed time.
\vspace{-.5mm}}
\label{fig:add_avg}
\end{figure*}

\vspace{-1mm}
\paragraph{Methods Allowing Local Optimization.}
Developing methods that allow for local optimization requires carefully devising data-local subproblems to be solved after each communication round. \cite{DANE,DISCO} have proposed distributed Newton-type algorithms in this spirit. However, the subproblems must be solved to high accuracy for convergence to hold, which is often prohibitive as the size of the data on one machine is still relatively large.
In contrast, the \cocoa framework \cite{jaggi2014communication} allows using any local solver of weak local approximation quality in each round. %
By making use of the primal-dual structure in the line of work of \cite{Yu:2012fp,Pechyony:2011wi,Yang:2013vl,Lee:2015vr}, the \cocoa and \cocoap frameworks also allow more control over the aggregation of updates between machines. 
The practical variant DisDCA-p proposed in \cite{Yang:2013vl} allows additive updates but is restricted to SDCA updates, and was proposed without convergence guarantees. 
DisDCA-p can be recovered as a special case of the \cocoap framework when using SDCA as a local solver, if $n_k = n/K$ and $\sigma':=K$, see Appendix~\ref{app:disDCA}. 
The theory presented here also therefore covers that method.

\vspace{-.5em}
\paragraph{ADMM.}
An alternative approach to distributed optimization is to use the alternating direction method of multipliers (ADMM), as used for distributed SVM training in, e.g., \cite{Forero:2010vv}. This uses a penalty parameter balancing between the equality constraint $\wv$ and the optimization objective \cite{boyd2011distributed}. However, the known convergence rates for ADMM are weaker than the more problem-tailored methods mentioned previously, and the choice of the  penalty parameter is often unclear.

\vspace{-.5em}
\paragraph{Batch Proximal Methods.}
In spirit, for the special case of adding ($\aggpar=1$), \cocoap resembles a batch proximal method, using the separable approximation \eqref{eq:subproblem} instead of the original dual \eqref{eq:dual}. Known batch proximal methods require high accuracy subproblem solutions, and don't allow arbitrary solvers of weak accuracy $\Theta$ such as we do here.

\section{Numerical Experiments}
\label{sec:experiments}

We present experiments on several large real-world distributed datasets. 
We show that $\cocoap$ converges faster 
in terms of total rounds as well as elapsed time as compared to \cocoa in all cases, 
despite varying: the dataset, values of regularization, batch size, and cluster size 
(Section \ref{sec:addavg}). In Section \ref{sec:scaling} we demonstrate that this 
performance translates to orders of magnitude improvement in convergence when 
scaling up the number of machines $K$, as compared to \cocoa as well as to several 
other state-of-the-art methods. Finally, in Section~\ref{sec:sigma} we investigate the 
impact of the local subproblem parameter $\sigma'$ in the \cocoap framework.

\vspace{-.8em}
\begin{table}[h]
\caption{Datasets for Numerical Experiments. \vspace{1mm}}
\label{tab:datasets}
   \begin{center}
      \begin{tabular}{l| r | r | r  }
    {\small\textbf{Dataset}} & $n$ &
    $d$ & {\small\textbf{Sparsity}} \\
    \hline
	covertype & 522,911 & 
	  54 & 22.22\% \\
		epsilon & 400,000 &
	  2,000 & 100\% \\
	  RCV1 & 677,399 &
	  47,236 & 0.16\%
      \end{tabular}
   \end{center}\vspace{-1.6em}
\end{table}

\subsection{Implementation Details}
We implement all algorithms in Apache
\textsf{\small Spark}  \cite{Zaharia:2012ve} and run them on m3.large Amazon EC2 instances, applying each method to the binary hinge-loss support vector machine. 
The analysis for this non-smooth loss was not covered in 
\cite{jaggi2014communication} but has been captured here, and thus is both 
theoretically and practically justified. 
The used datasets are summarized in Table \ref{tab:datasets}.

For illustration and ease of comparison, we here use SDCA \cite{ShalevShwartz:2013wl} as the local solver for both \cocoa and \cocoap.
Note that in this special case, and if additionally $\sigma':=K$, and if the partitioning $n_k = n/K$ is balanced, once can show that the \cocoap framework reduces to the practical variant of DisDCA \cite{Yang:2013vl} (which had no convergence guarantees so far).
We include more details on the connection in Appendix~\ref{app:disDCA}.
\vspace{-.5em}

\subsection{Comparison of \cocoap and \cocoa}
\label{sec:addavg}
We compare the \cocoap and \cocoa frameworks directly using two datasets 
(Covertype and RCV1) across various values of $\lambda$, the regularizer, in Figure 
\ref{fig:add_avg}. For each value of $\lambda$ we consider both methods with 
different values of $H$, the number of local iterations performed before 
communicating to the master. For all runs of \cocoap we use the safe upper bound of 
$\aggpar K$ for $\sigma'$. In terms of both the total number of communications 
made and the elapsed time, \cocoap (shown in blue) converges to the optimal solution 
faster than \cocoa (red). The discrepancy is larger for greater values of $\lambda$, 
where the  strongly convex regularizer has more of an impact and the problem 
difficulty is reduced. We also see a greater performance gap for smaller values of $H$, 
where there is frequent communication between the machines and the master, and changes between the algorithms therefore play a larger role.

\subsection{Scaling the Number of Machines $K$}
\label{sec:scaling}

In Figure \ref{fig:scaling_k} we demonstrate the ability of \cocoap to scale with an 
increasing number of machines $K$. The experiments confirm the ability of strong 
scaling of the new method, as predicted by our theory in Section~\ref{sec:convergence}, 
in contrast to the competing methods.
Unlike \cocoa, which becomes linearly slower when increasing the number of 
machines, the performance of \cocoap improves with additional 
machines, only starting to degrade slightly once~$K$=16 for the RCV1 dataset.

\newcommand{\halftrimfig}[1]{\subfigure{\includegraphics[trim = 40 190 40 180, clip, width=.49\linewidth]{#1}}}

\newcommand{\trimfig}[1]{\subfigure{\includegraphics[trim = 25 240 30 240, clip, width=.6\linewidth]{#1}}}
\begin{figure}[ht!]
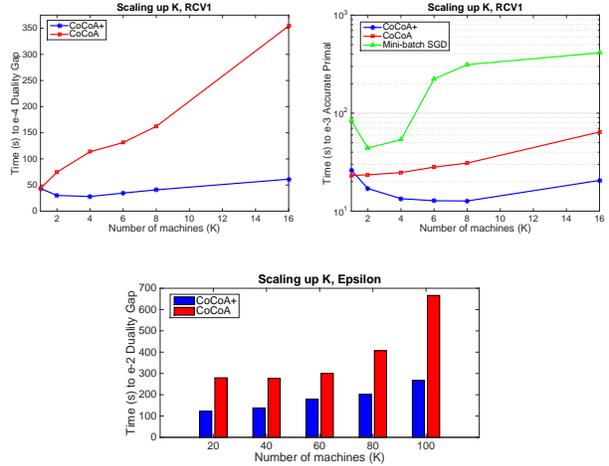

\centering
\halftrimfig{new_scaling_k_rcv.pdf}
\halftrimfig{scaling_k_all_rcv_log.pdf}
\trimfig{scaling_k_eps.pdf}
\vspace{-1em}
\caption{The effect of increasing $K$ on the time (s) to reach an $\epsilon_\bD$-accurate solution. We see that \cocoap converges twice as fast as \cocoa on 100 machines for the Epsilon dataset, and nearly 7 times as quickly for the RCV1 dataset. Mini-batch SGD converges an order of magnitude more slowly than both methods.} \vspace{-1em}
\label{fig:scaling_k}
\end{figure}

\subsection{Impact of the Subproblem Parameter $\sigma'$}
\label{sec:sigma}
Finally, in Figure \ref{fig:sigma}, we consider the effect of the choice of the subproblem parameter $\sigma'$ on convergence. We plot both the number of communications and clock time  on a log-log scale for the RCV1 dataset with $K$=8 and $H$=$1e4$. For $\aggpar=1$ (the most aggressive variant of \cocoap in which updates are added) we consider several different values of~$\sigma'$, ranging from $1$ to $8$. The value $\sigma'$=8 represents the safe upper bound of $\aggpar K$. The optimal convergence occurs around $\sigma'$=4, and diverges for $\sigma' \le 2$.
Notably, we see that the easy to calculate upper bound of $\sigma':=\aggpar K$ %
(as given by Lemma \ref{lem:sigmaPrimeNotBad})
has only slightly worse performance than best possible subproblem parameter in our setting. %
\vspace{-1em}

\begin{figure}[h!]
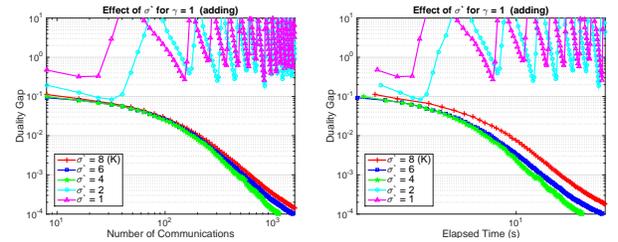

\halftrimfig{sigma_rcv_comm.pdf}
\halftrimfig{sigma_rcv_time.pdf}
\vspace{-1em}
\caption{The effect of $\sigma'$ on convergence of $\cocoap$ for the RCV1 dataset distributed across $K$=8 machines. Decreasing $\sigma'$ improves performance in terms of communication and overall run time until a certain point, after which the algorithm diverges. The ``safe'' upper bound of $\sigma'$:=$K$=8 has only slightly worse performance than the practically best ``un-safe'' value of $\sigma'$.}\vspace{-1em}
\label{fig:sigma}
\end{figure}

\section{Conclusion}
In conclusion, we present a novel framework \cocoap that allows for fast and  
communication-efficient \textit{additive aggregation} in distributed 
algorithms for primal-dual optimization. 
We analyze the theoretical performance of this method, giving strong 
primal-dual convergence rates with outer iterations scaling independently of 
the number of machines. 
We extended our theory to allow for non-smooth losses. Our 
experimental results show significant speedups over previous methods, including the 
original \cocoa framework as well as other state-of-the-art methods.

\vspace{-.4em}
\paragraph{Acknowledgments.}
We thank Ching-pei Lee and an anonymous reviewer for several helpful insights and comments.

\bibliography{minibatch}

\begin{thebibliography}{34}
\providecommand{\natexlab}[1]{#1}
\providecommand{\url}[1]{\texttt{#1}}
\expandafter\ifx\csname urlstyle\endcsname\relax
  \providecommand{\doi}[1]{doi: #1}\else
  \providecommand{\doi}{doi: \begingroup \urlstyle{rm}\Url}\fi

\bibitem[Balcan et~al.(2012)Balcan, Blum, Fine, and Mansour]{Balcan:2012tc}
Balcan, M.-F., Blum, A., Fine, S., and Mansour, Y.
\newblock {Distributed Learning, Communication Complexity and Privacy}.
\newblock In \emph{COLT}, pp.\  26.1--26.22, 2012.

\bibitem[Boyd et~al.(2011)Boyd, Parikh, Chu, Peleato, and
  Eckstein]{boyd2011distributed}
Boyd, S., Parikh, N., Chu, E., Peleato, B., and Eckstein, J.
\newblock Distributed optimization and statistical learning via the alternating
  direction method of multipliers.
\newblock \emph{Foundations and Trends in Machine Learning}, 3\penalty0
  (1):\penalty0 1--122, 2011.

\bibitem[Duchi et~al.(2013)Duchi, Jordan, and McMahan]{Duchi:2013te}
Duchi, J.~C., Jordan, M.~I., and McMahan, H.~B.
\newblock {Estimation, Optimization, and Parallelism when Data is Sparse}.
\newblock In \emph{NIPS}, 2013.

\bibitem[Fercoq \& Richt{\'a}rik(2013)Fercoq and Richt{\'a}rik]{APPROX}
Fercoq, O. and Richt{\'a}rik, P.
\newblock Accelerated, parallel and proximal coordinate descent.
\newblock \emph{arXiv:1312.5799}, 2013.

\bibitem[Fercoq et~al.(2014)Fercoq, Qu, Richt{\'a}rik, and
  Tak{\'a}{\v{c}}]{fercoq2014fast}
Fercoq, O., Qu, Z., Richt{\'a}rik, P., and Tak{\'a}{\v{c}}, M.
\newblock Fast distributed coordinate descent for non-strongly convex losses.
\newblock \emph{IEEE Workshop on Machine Learning for Signal Processing}, 2014.

\bibitem[Forero et~al.(2010)Forero, Cano, and Giannakis]{Forero:2010vv}
Forero, P.~A., Cano, A., and Giannakis, G.~B.
\newblock {Consensus-Based Distributed Support Vector Machines}.
\newblock \emph{JMLR}, 11:\penalty0 1663--1707, 2010.

\bibitem[Jaggi et~al.(2014)Jaggi, Smith, Tak{\'a}\v{c}, Terhorst, Krishnan,
  Hofmann, and Jordan]{jaggi2014communication}
Jaggi, M., Smith, V., Tak{\'a}\v{c}, M., Terhorst, J., Krishnan, S., Hofmann,
  T., and Jordan, M.~I.
\newblock Communication-efficient distributed dual coordinate ascent.
\newblock In \emph{NIPS}, 2014.

\bibitem[Lee \& Roth(2015)Lee and Roth]{Lee:2015vr}
Lee, C.-P. and Roth, D.
\newblock {Distributed Box-Constrained Quadratic Optimization for Dual Linear
  SVM}.
\newblock In \emph{ICML}, 2015.

\bibitem[Liu \& Wright(2014)Liu and Wright]{WrightAsynchrous14}
Liu, J. and Wright, S.~J.
\newblock Asynchronous stochastic coordinate descent: Parallelism and
  convergence properties.
\newblock \emph{arXiv:1403.3862}, 2014.

\bibitem[Liu et~al.(2014)Liu, Wright, R{\'e}, Bittorf, and Sridhar]{Liu:2014wj}
Liu, J., Wright, S.~J., R{\'e}, C., Bittorf, V., and Sridhar, S.
\newblock {An Asynchronous Parallel Stochastic Coordinate Descent Algorithm}.
\newblock In \emph{ICML}, 2014.

\bibitem[Lu \& Xiao(2013)Lu and Xiao]{lu2013complexity}
Lu, Z. and Xiao, L.
\newblock On the complexity analysis of randomized block-coordinate descent
  methods.
\newblock \emph{arXiv preprint arXiv:1305.4723}, 2013.

\bibitem[Mann et~al.(2009)Mann, McDonald, Mohri, Silberman, and
  Walker]{Mann:2009tr}
Mann, G., McDonald, R., Mohri, M., Silberman, N., and Walker, D.~D.
\newblock {Efficient Large-Scale Distributed Training of Conditional Maximum
  Entropy Models}.
\newblock \emph{NIPS}, 2009.

\bibitem[Mare\v{c}ek et~al.(2014)Mare\v{c}ek, Richt\'arik, and
  Tak\'a\v{c}]{marecek2014distributed}
Mare\v{c}ek, J., Richt\'arik, P., and Tak\'a\v{c}, M.
\newblock Distributed block coordinate descent for minimizing partially
  separable functions.
\newblock \emph{arXiv:1406.0238}, 2014.

\bibitem[McWilliams et~al.(2014)McWilliams, Heinze, Meinshausen, Krummenacher,
  and Vanchinathan]{McWilliams:2014tl}
McWilliams, B., Heinze, C., Meinshausen, N., Krummenacher, G., and
  Vanchinathan, H.~P.
\newblock {LOCO: Distributing Ridge Regression with Random Projections}.
\newblock \emph{arXiv stat.ML}, June 2014.

\bibitem[Niu et~al.(2011)Niu, Recht, R{\'e}, and Wright]{Niu:2011wo}
Niu, F., Recht, B., R{\'e}, C., and Wright, S.~J.
\newblock {Hogwild!: A Lock-Free Approach to Parallelizing Stochastic Gradient
  Descent}.
\newblock In \emph{NIPS}, 2011.

\bibitem[Pechyony et~al.(2011)Pechyony, Shen, and Jones]{Pechyony:2011wi}
Pechyony, D., Shen, L., and Jones, R.
\newblock {Solving Large Scale Linear SVM with Distributed Block Minimization}.
\newblock In \emph{NIPS Workshop on Big Learning}, 2011.

\bibitem[Qu \& Richt\'{a}rik(2014)Qu and Richt\'{a}rik]{ALPHA}
Qu, Z. and Richt\'{a}rik, P.
\newblock Coordinate descent with arbitrary sampling {I}: Algorithms and
  complexity.
\newblock \emph{arXiv:1412.8060}, 2014.

\bibitem[Qu et~al.(2014)Qu, Richt\'{a}rik, and Zhang]{QUARTZ}
Qu, Z., Richt\'{a}rik, P., and Zhang, T.
\newblock Randomized dual coordinate ascent with arbitrary sampling.
\newblock \emph{arXiv:1411.5873}, 2014.

\bibitem[Richt{\'a}rik \& Tak{\'a}{\v{c}}(2013)Richt{\'a}rik and
  Tak{\'a}{\v{c}}]{richtarik2013distributed}
Richt{\'a}rik, P. and Tak{\'a}{\v{c}}, M.
\newblock Distributed coordinate descent method for learning with big data.
\newblock \emph{arXiv preprint arXiv:1310.2059}, 2013.

\bibitem[Richt{\'a}rik \& Tak{\'a}{\v{c}}(2014)Richt{\'a}rik and
  Tak{\'a}{\v{c}}]{richtarik}
Richt{\'a}rik, P. and Tak{\'a}{\v{c}}, M.
\newblock {Iteration complexity of randomized block-coordinate descent methods
  for minimizing a composite function}.
\newblock \emph{Mathematical Programming}, 144\penalty0 (1-2):\penalty0 1--38,
  April 2014.

\bibitem[Richt\'arik \& Tak\'a\v{c}(2015)Richt\'arik and
  Tak\'a\v{c}]{richtarikBigData}
Richt\'arik, P. and Tak\'a\v{c}, M.
\newblock Parallel coordinate descent methods for big data optimization.
\newblock \emph{Mathematical Programming}, pp.\  1--52, 2015.

\bibitem[Shalev-Shwartz \& Zhang(2013{\natexlab{a}})Shalev-Shwartz and
  Zhang]{ASDCA}
Shalev-Shwartz, S. and Zhang, T.
\newblock Accelerated mini-batch stochastic dual coordinate ascent.
\newblock In \emph{NIPS}, 2013{\natexlab{a}}.

\bibitem[Shalev-Shwartz \& Zhang(2013{\natexlab{b}})Shalev-Shwartz and
  Zhang]{MinibatchASDCA}
Shalev-Shwartz, S. and Zhang, T.
\newblock Accelerated proximal stochastic dual coordinate ascent for
  regularized loss minimization.
\newblock \emph{arXiv:1309.2375}, 2013{\natexlab{b}}.

\bibitem[Shalev-Shwartz \& Zhang(2013{\natexlab{c}})Shalev-Shwartz and
  Zhang]{ShalevShwartz:2013wl}
Shalev-Shwartz, S. and Zhang, T.
\newblock {Stochastic Dual Coordinate Ascent Methods for Regularized Loss
  Minimization}.
\newblock \emph{JMLR}, 14:\penalty0 567--599, 2013{\natexlab{c}}.

\bibitem[Shamir \& Srebro(2014)Shamir and Srebro]{Shamir:2014tp}
Shamir, O. and Srebro, N.
\newblock {Distributed Stochastic Optimization and Learning }.
\newblock In \emph{Allerton}, 2014.

\bibitem[Shamir et~al.(2014)Shamir, Srebro, and Zhang]{DANE}
Shamir, O., Srebro, N., and Zhang, T.
\newblock Communication efficient distributed optimization using an approximate
  newton-type method.
\newblock In \emph{ICML}, 2014.

\bibitem[Tappenden et~al.(2015)Tappenden, Tak\'{a}\v{c}, and
  Richt\'{a}rik]{TTR:IMPROVRED}
Tappenden, R., Tak\'{a}\v{c}, M., and Richt\'{a}rik, P.
\newblock On the complexity of parallel coordinate descent.
\newblock Technical report, 2015.
\newblock ERGO 15-001, University of Edinburgh.

\bibitem[Yang(2013)]{Yang:2013vl}
Yang, T.
\newblock {Trading Computation for Communication: Distributed Stochastic Dual
  Coordinate Ascent}.
\newblock In \emph{NIPS}, 2013.

\bibitem[Yang et~al.(2013)Yang, Zhu, Jin, and Lin]{Yang:2013ui}
Yang, T., Zhu, S., Jin, R., and Lin, Y.
\newblock {On Theoretical Analysis of Distributed Stochastic Dual Coordinate
  Ascent}.
\newblock \emph{arXiv:1312.1031}, 2013.

\bibitem[Yu et~al.(2012)Yu, Hsieh, Chang, and Lin]{Yu:2012fp}
Yu, H.-F., Hsieh, C.-J., Chang, K.-W., and Lin, C.-J.
\newblock {Large Linear Classification When Data Cannot Fit in Memory}.
\newblock \emph{TKDD}, 5\penalty0 (4):\penalty0 1--23, 2012.

\bibitem[Zaharia et~al.(2012)Zaharia, Chowdhury, Das, Dave, McCauley, Franklin,
  Shenker, and Stoica]{Zaharia:2012ve}
Zaharia, M., Chowdhury, M., Das, T., Dave, A., McCauley, M., Franklin, M.~J.,
  Shenker, S., and Stoica, I.
\newblock {Resilient Distributed Datasets: A Fault-Tolerant Abstraction for
  In-Memory Cluster Computing}.
\newblock In \emph{NSDI}, 2012.

\bibitem[Zhang \& Lin(2015)Zhang and Lin]{DISCO}
Zhang, Y. and Lin, X.
\newblock {DiSCO: Distributed Optimization for Self-Concordant Empirical Loss}.
\newblock In \emph{ICML}, pp.\  362--370, 2015.

\bibitem[Zhang et~al.(2013)Zhang, Duchi, and Wainwright]{Zhang:2013wq}
Zhang, Y., Duchi, J.~C., and Wainwright, M.~J.
\newblock {Communication-Efficient Algorithms for Statistical Optimization}.
\newblock \emph{JMLR}, 14:\penalty0 3321--3363, 2013.

\bibitem[Zinkevich et~al.(2010)Zinkevich, Weimer, Smola, and
  Li]{Zinkevich:2010tj}
Zinkevich, M.~A., Weimer, M., Smola, A.~J., and Li, L.
\newblock {Parallelized Stochastic Gradient Descent}.
\newblock \emph{NIPS}, 2010.

\end{thebibliography}
\bibliographystyle{icml2015}

\clearpage
\appendix
 \onecolumn
\part*{Appendix}

\section{Technical Lemmas}

\begin{lemma}
[Lemma 21 in \cite{ShalevShwartz:2013wl}]
\label{lemma:ajvoiewffa}
Let $\ell_i : \R \to \R$ be an 
$L$-Lipschitz continuous. Then for any real value $a$ with $|a|> L$ we have that
$\ell_i^*(a) = \infty$.
\end{lemma}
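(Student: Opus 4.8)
The plan is to unfold the definition of the Fenchel conjugate, $\ell_i^*(a) = \sup_{x\in\R}\{ax - \ell_i(x)\}$, and to exploit the fact that $L$-Lipschitz continuity forces $\ell_i$ to grow at most linearly with slope $L$. The intuition is that the conjugate probes the linear functions $x\mapsto ax$ that lie below $\ell_i$ up to a constant; once the slope $a$ exceeds $L$ in magnitude, no such constant exists and the supremum must be $+\infty$.

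First I would record the elementary consequence of the Lipschitz assumption, applied to the pair of points $x$ and $0$: for every $x\in\R$ we have $|\ell_i(x)-\ell_i(0)|\le L|x|$, and in particular $\ell_i(x)\le \ell_i(0)+L|x|$. Substituting this into the definition of the conjugate yields the lower bound
\[
\ell_i^*(a)=\sup_{x\in\R}\{ax-\ell_i(x)\}\ \ge\ \sup_{x\in\R}\big\{ax-L|x|-\ell_i(0)\big\}.
\]
Thus it suffices to show that the right-hand supremum diverges whenever $|a|>L$.

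The second step is to split on the sign of $a$ in order to resolve the absolute value. If $a>L$, I would restrict the supremum to $x>0$, where the bracketed expression becomes $(a-L)x-\ell_i(0)$; since $a-L>0$, sending $x\to+\infty$ drives this quantity to $+\infty$. Symmetrically, if $a<-L$, I would restrict to $x<0$, where the expression equals $(a+L)x-\ell_i(0)$ with $a+L<0$, so that $x\to-\infty$ again forces divergence to $+\infty$. In either case $\ell_i^*(a)=+\infty$, which is exactly the claim for $|a|>L$.

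There is no substantive obstacle here: the result is an immediate structural property of conjugates of Lipschitz functions. The only point requiring care is correctly handling the $|x|$ term in the two regimes $a>L$ and $a<-L$, namely choosing the sign of $x$ so that the surviving linear coefficient ($a-L$ in the first case, $a+L$ in the second) has the matching sign to make the expression unbounded above.
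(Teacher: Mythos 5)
Your proof is correct and is the standard argument: bounding $\ell_i(x)\le \ell_i(0)+L|x|$ and letting $x\to\pm\infty$ with the sign matched to $a$. The paper itself gives no proof of this lemma, deferring entirely to Lemma 21 of \cite{ShalevShwartz:2013wl}, whose proof is essentially the same as yours, so there is nothing to reconcile.
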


\begin{lemma}
\label{lemma:asfewfawfcda}
Assuming the loss functions $\ell_i$ are bounded by $\ell_i(0) \leq 1$ for all $i\in[n]$ (as we have assumed in \eqref{eq:afswfevfwaefa} above), then 
for the zero vector $\vc{\alphav}{0}
 := {\bf 0}\in \R^n$, we have
\begin{equation}
\label{eq:afjfjaoefvcwa}
\bD(\alphav^*)
 - \bD(\vc{\alphav}{0})
= 
\bD(\alphav^*)
-\bD({\bf 0})
 \leq 1.
 \end{equation}
\end{lemma}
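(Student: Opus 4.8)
The plan is to sandwich $\bD(\alphav^*)$ between $0$ and $1$ using weak duality together with the boundedness assumption $\ell_i(0) \leq 1$. The equality in the statement is immediate, since the algorithm is initialized at $\vc{\alphav}{0} := \mathbf{0}$, so the whole task reduces to bounding $\bD(\alphav^*) - \bD(\mathbf{0})$ from above by $1$ (it is automatically non-negative because $\alphav^*$ maximizes $\bD$).

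First I would upper-bound the optimal dual value. By weak duality (as observed just after \eqref{eq:gap}), $\bD(\alphav^*)$ is at most every primal value, and evaluating the primal \eqref{eq:primal} at the convenient point $\wv = \mathbf{0}$ gives
\[
\bD(\alphav^*) \leq \bP(\mathbf{0}) = \tfrac1n\sum_{i=1}^n \ell_i(0) \leq 1,
\]
where the last step is exactly the standing normalization \eqref{eq:afswfevfwaefa}. Next I would lower-bound $\bD(\mathbf{0})$: plugging $\alphav = \mathbf{0}$ into the dual \eqref{eq:dual} annihilates the quadratic regularization term and leaves $\bD(\mathbf{0}) = -\tfrac1n\sum_{j=1}^n \ell_j^*(0)$. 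Because the losses are assumed non-negative, the conjugate evaluated at the origin satisfies $\ell_j^*(0) = \sup_a\big(-\ell_j(a)\big) = -\inf_a \ell_j(a) \leq 0$, so every summand is non-positive and therefore $\bD(\mathbf{0}) \geq 0$.

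Combining the two estimates yields $\bD(\alphav^*) - \bD(\mathbf{0}) \leq 1 - 0 = 1$, which is the claim. I do not expect a real obstacle here; the argument is short and the only point requiring care is the sign bookkeeping when computing $\ell_j^*(0)$, where non-negativity of the $\ell_j$ is precisely what forces $\bD(\mathbf{0}) \geq 0$ and lets the two bounds close the gap.
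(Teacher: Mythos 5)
Your proof is correct and follows essentially the same route as the paper: weak duality applied at the primal point $\wv(\mathbf{0})=\mathbf{0}$, the normalization $\ell_i(0)\leq 1$ to bound $\bP(\mathbf{0})$, and non-negativity of the losses to get $\ell_j^*(0)\leq 0$ and hence $\bD(\mathbf{0})\geq 0$. If anything, your version spells out the two bounds more carefully than the paper's compressed one-line chain, which elides exactly the sign bookkeeping you make explicit.
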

\begin{proof}
For $\alphav := {\bf 0}\in \R^n$, we have
$\wv(\alphav) = 
\frac1{\lambda n}
A \alphav 
 = {\bf 0} \in \R^d$.
 Therefore, by definition of the dual objective $\bD$ given in~\eqref{eq:dual},
\begin{align*}
0 &\leq \bD(\alphav^*)
-\bD(\alphav)
\leq \bP(\wv(\alphav)) - \bD(\alphav)
 = 0 - \bD(\alphav)
 \overset{\eqref{eq:afswfevfwaefa},\eqref{eq:dual}
}{\leq} 1. \qedhere
\end{align*} 
\end{proof}

\section{Proofs}

\subsection{Proof of Lemma \ref{lem:RelationOfDTOSubproblems}}
  
Indeed, we have
\begin{align}
\label{eq:afijwfcewa}
\bD(\alphav
+\aggpar 
\sum_{k=1}^K
\vsubset{\Delta \alphav}{k}
)
&=
\underbrace{-\frac1n\sum_{i=1}^n
\ell_i^*(-\alpha_i
    -\aggpar (\sum_{k=1}^K
     \vsubset{\Delta \alphav}{k})_i)}_{A} -\frac\lambda2 
\underbrace{\Big\| \frac1{\lambda n}
A (\alphav + \aggpar 
  \sum_{k=1}^K \vsubset{\Delta \alphav}{k}) \Big\|^2}_B.  
\end{align}
Now, let us bound the terms $A$ and $B$ separately.
We have
\begin{align*}
A
&=
 -\frac1n\sum_{k=1}^K
 \left(
 \sum_{i\in \mathcal{P}_k}
 \ell_i^*(-\alpha_i-\aggpar 
  (\vsubset{\Delta \alphav}{k})_i)
 \right)
=
 -\frac1n\sum_{k=1}^K
 \left(
 \sum_{i\in \mathcal{P}_k}
 \ell_i^*(-(1-\aggpar)
   \alpha_i-\aggpar 
  (\alphav + \vsubset{\Delta \alphav}{k})_i)
 \right) 
\\
&\geq 
 -\frac1n\sum_{k=1}^K
 \left(
 \sum_{i\in \mathcal{P}_k}
 (1-\aggpar) \ell_i^*(-
   \alpha_i)
   +\aggpar 
  \ell_i^*(-(\alphav + \vsubset{\Delta \alphav}{k})_i)
 \right). 
\end{align*}
Where the last inequality is due to Jensen's inequality.
Now we will bound $B$, using the safe separability measurement $\sigma'$ as defined in \eqref{eq:sigmaPrimeSafeDefinition}.\vspace{-1mm}
\begin{align*}
B
&=
\Big\|\frac1{\lambda n}
A (\alphav + \aggpar 
  \sum_{k=1}^K \vsubset{\Delta \alphav}{k}) \Big\|^2
=
\Big\|\wv(\alphav) + \aggpar\frac1{\lambda n}
   \sum_{k=1}^K A\vsubset{\Delta \alphav}{k}  \Big\|^2  
\\
&   =
\|\wv(\alphav)   \|^2
+
\sum_{k=1}^K
2\aggpar\frac1{\lambda n} 
\wv(\alphav)^T  
    A\vsubset{\Delta \alphav}{k}  
+
\aggpar
\Big(\frac1{\lambda n}\Big)^2
\aggpar
\Big\|  
   \sum_{k=1}^K A\vsubset{\Delta \alphav}{k}  \Big\|^2  
\\
& \overset{\eqref{eq:sigmaPrimeSafeDefinition}}
{\leq}
\|\wv(\alphav)   \|^2
+
\sum_{k=1}^K
2\aggpar\frac1{\lambda n} 
\wv(\alphav)^T  
    A\vsubset{\Delta \alphav}{k}  
   +
\aggpar
\Big(\frac1{\lambda n}\Big)^2
\sigma'
   \sum_{k=1}^K \|A \Delta 
     \vsubset{\alphav}{k}\|^2.    
\end{align*}  
Plugging $A$ and $B$ into 
  \eqref{eq:afijwfcewa} 
  will give us
\begin{align*}
\nonumber
 \bD(\alphav
+\aggpar 
\sum_{k=1}^K
\vsubset{\Delta \alphav}{k}
)
 \ge&
-\frac1n\sum_{k=1}^K
 \left(
 \sum_{i\in \mathcal{P}_k}
 (1-\aggpar) \ell_i^*(-
   \alpha_i)
   +\aggpar 
  \ell_i^*(-(\alphav + \vsubset{\Delta \alphav}{k})_i)
 \right)
\\
&   
-\aggpar  \frac\lambda2 \|\wv(\alphav)   \|^2
-(1-\aggpar)  \frac\lambda2 \|\wv(\alphav)   \|^2
-\frac\lambda2  
\sum_{k=1}^K
2\aggpar\frac1{\lambda n} 
\wv(\alphav)^T  
    A\vsubset{\Delta \alphav}{k}  
 -\frac\lambda2 
\aggpar
\Big(\frac1{\lambda n}\Big)^2
\sigma'
   \sum_{k=1}^K \|A \Delta \vsubset{\alphav}{k}\|^2
\\%
 =&
 \underbrace{
 -\frac1n\sum_{k=1}^K
 \left(
 \sum_{i\in \mathcal{P}_k}
 (1-\aggpar) \ell_i^*(-
   \alpha_i) 
 \right)
-(1-\aggpar)  \frac\lambda2 \|\wv(\alphav)   \|^2 
}_{(1-\aggpar) \bD(\alphav)}
\\
& 
+  
\aggpar 
 \sum_{k=1}^K
 \left(
 -\frac1n
 \sum_{i\in \mathcal{P}_k}
   \ell_i^*(-(\alphav + \vsubset{\Delta \alphav}{k})_i)
   - \frac1{K} \frac\lambda2 \|\wv(\alphav)   \|^2
   -   
  \frac1{n} 
\wv(\alphav)^T  
    A\vsubset{\Delta \alphav}{k}
    -\frac\lambda2  
\sigma'  \Big\|\frac1{\lambda n}A \Delta \vsubset{\alphav}{k} \Big\|^2    
 \right)
\\
\overset{\eqref{eq:subproblem}}{=}& (1-\aggpar) \bD(\alphav)
 +\aggpar \sum_{k=1}^K \Ggk(  \vsubset{\Delta \alphav}{k}; \wv, \vsubset{\alphav}{k}).   
\end{align*}

\subsection{Proof of Lemma \ref{lem:sigmaPrimeNotBad}}

See \cite{richtarik2013distributed}. %

\subsection{Proof of Lemma \ref{lem:basicLemma}}
For sake of notation, 
we will write 
$\alphav$ instead of $\vc{\alphav}{t}$,
$\wv$ instead of $\wv(\vc{\alphav}{t})$
and
$\uv$ instead of $\vc{\uv}{t}$.

Now, let us estimate the expected change of the dual objective. 
Using the definition of the dual update $\vc{\alphav}{t+1} := \vc{\alphav}{t} + \aggpar \, \sum_k \vsubset{\Delta \alphav}{k}$ resulting in Algorithm~\ref{alg:cocoa}, we have
\begin{align*}
\Exp\big[\bD(\vc{\alphav}{t})
 - \bD(\vc{\alphav}{t+1})\big]
& =
\Exp\Big[\bD(\alphav)
 - \bD(\alphav +
  \aggpar \sum_{k=1}^K
  \vsubset{\Delta \alphav}{k})\Big]
\\
& \text{(by Lemma \ref{lem:RelationOfDTOSubproblems} on the local function $\Ggk(\alphav;\wv, \vsubset{\alphav}{k})$ approximating the global objective $\bD(\alphav)$)}\\
&\leq
\Exp\Big[\bD(\alphav)
-(1-\aggpar)\bD(\alphav)
-\aggpar 
 \sum_{k=1}^K 
 \Ggk (\vsubset{
 \vc{\Delta \alphav}{t}}{k}; \wv, \vsubset{\alphav}{k})
\Big]\\
&=
\aggpar
\Exp\Big[
 \bD(\alphav)
- 
 \sum_{k=1}^K 
 \Ggk (\vsubset{
 \vc{\Delta \alphav}{t}}{k}; \wv, \vsubset{\alphav}{k})
\Big]
\\
&
=
\aggpar
\Exp\Big[
 \bD(\alphav)
 -
 \sum_{k=1}^K 
 \Ggk(\vsubset{\Delta \alphav^*}{k};\wv, \vsubset{\alphav}{k})
 +
 \sum_{k=1}^K 
 \Ggk(\vsubset{\Delta \alphav^*}{k};\wv, \vsubset{\alphav}{k})
- 
 \sum_{k=1}^K 
 \Ggk (\vsubset{
 \vc{\Delta \alphav}{t}}{k}; \wv, \vsubset{\alphav}{k})
\Big]
\\
&\text{(by the notion of quality \eqref{eq:localSolutionQuality} of the local solver, as in Assumption \ref{asm:THeta})}\\
&\leq
\aggpar
\bigg(
 \bD(\alphav)
 -
 \sum_{k=1}^K 
 \Ggk(\vsubset{\Delta \alphav^*}{k};\wv, \vsubset{\alphav}{k})
 +
 \Theta
 \Big(
 \sum_{k=1}^K  
 \Ggk(\vsubset{\Delta \alphav^*}{k};\wv, \vsubset{\alphav}{k})
 -
\underbrace{  \sum_{k=1}^K  
 \Ggk({\bf 0};\wv, \vsubset{\alphav}{k})
 }_{\bD(\alphav)}
 \Big)
\bigg)
\\
&=
\aggpar
(1-\Theta)
\Big(
\underbrace{
 \bD(\alphav)
 -
 \sum_{k=1}^K 
 \Ggk(\vsubset{\Delta \alphav^*}{k};\wv, \vsubset{\alphav}{k})
 }_{C}
\Big).
\tagthis
\label{eq:Afasfwafewaef}
\end{align*} 
Now, let us upper bound 
the $C$ term 
(we will denote by
$\Delta \alphav^* 
 = \sum_{k=1}^K \vsubset{\Delta \alphav^*}{k}$):
\begin{align*}
C&
\overset{\eqref{eq:dual},
\eqref{eq:subproblem}}{=}
   \frac1n 
 \sum_{i =1}^n 
 \left(
\ell_i^*(-\alpha_i - \Delta \alphav^*_i)
-\ell_i^*(- \alpha_i)
\right)
 +\frac1n  
\wv(\alphav)^T A  \Delta \alphav^*
 + \sum_{k=1}^K 
\frac\lambda2
 \sigma'   \Big\|\frac1{\lambda n} A \vsubset{\Delta \alphav^*}{k}\Big\|^2
\\
&\leq  
   \frac1n 
 \sum_{i =1}^n 
 \left(
\ell_i^*(-\alpha_i - s (u_i - \alpha_i))
-\ell_i^*(- \alpha_i)
\right)
 +\frac1n  
\wv(\alphav)^T A  s (\uv  - \alphav )
 + \sum_{k=1}^K 
\frac\lambda2
 \sigma'   \Big\|\frac1{\lambda n} A \vsubset{s (\uv  - \alphav )}{k}\Big\|^2
\\
&\overset{\mbox{Strong conv.}}{\leq} 
   \frac1n 
 \sum_{i =1}^n 
 \left(
s \ell_i^*(-u_i )
+
(1-s)
\ell_i^*(-\alpha_i )
-
\frac{\mu}{2}
(1-s)s (u_i -\alpha_i)^2
-\ell_i^*(- \alpha_i)
\right)
 +\frac1n  
\wv(\alphav)^T A  s (\uv  - \alphav )
\\& \quad\quad\quad\quad\quad + \sum_{k=1}^K 
\frac\lambda2
 \sigma'   \Big\|\frac1{\lambda n} A \vsubset{s (\uv  - \alphav )}{k}\Big\|^2 
\\
&=
   \frac1n 
 \sum_{i =1}^n 
 \left(
s \ell_i^*(-u_i )
  -s 
\ell_i^*(-\alpha_i )
-
\frac{\mu}{2}
(1-s)s (u_i -\alpha_i)^2
\right)
 +\frac1n  
\wv(\alphav)^T A  s (\uv  - \alphav )
  + \sum_{k=1}^K 
\frac\lambda2
 \sigma'   \Big\|\frac1{\lambda n} A \vsubset{s (\uv  - \alphav )}{k}\Big\|^2.  
\end{align*}
The convex conjugate maximal property implies that
\begin{equation}
\label{eq:adjwofcewa}
\ell_i^*(-u_i)
= -u_i \wv(\alphav)^T \xv_i
  -\ell_i(\wv(\alphav)^T \xv_i).
\end{equation}
Moreover, from the definition of the primal and dual optimization problems \eqref{eq:primal},
\eqref{eq:dual}, we can write the duality gap as
\begin{align}
\label{eq:asdfjiwjfeojawfa}
\gap(\alphav) := \bP(\wv(\alphav))-\bD(\alphav)
&\overset{
\eqref{eq:primal},
\eqref{eq:dual}
}{=}
 \frac1{\N} 
 \sum_{i=1}^\N
 \left(
  \ell_i( \xv_j^T \wv) 
 +  \ell_i^*(- \alpha_i)
 + \wv(\alphav)^T \xv_i \alpha_i
 \right).  
\end{align}
Hence,
\begin{align*}
C
&\overset{
\eqref{eq:adjwofcewa}}
{\leq}
  \frac1n 
 \sum_{i =1}^n 
 \left( 
-s u_i \wv(\alphav)^T \xv_i
  -s\ell_i(\wv(\alphav)^T \xv_i)
  -s 
\ell_i^*(-\alpha_i )
\underbrace{-s \wv(\alphav)^T \xv_i \alpha_i
+s \wv(\alphav)^T \xv_i \alpha_i
}_{0}
-
\frac{\mu}{2}
(1-s)s (u_i -\alpha_i)^2
\right)
\\&\qquad  +\frac1n  
\wv(\alphav)^T A  s (\uv  - \alphav )
 + \sum_{k=1}^K 
\frac\lambda2
 \sigma'   \Big\|\frac1{\lambda n} A \vsubset{s (\uv  - \alphav )}{k}\Big\|^2 
\\
&=
  \frac1n 
 \sum_{i =1}^n 
 \left( 
  -s\ell_i(\wv(\alphav)^T \xv_i)
  -s\ell_i^*(-\alpha_i )
  -s \wv(\alphav)^T \xv_i \alpha_i
\right)
+
  \frac1n 
 \sum_{i =1}^n 
 \left(  s \wv(\alphav)^T \xv_i
( \alpha_i-u_i )
 -
\frac{\mu}{2}
(1-s)s (u_i -\alpha_i)^2
\right)
\\&\qquad  +\frac1n  
\wv(\alphav)^T A  s (\uv  - \alphav )
 + \sum_{k=1}^K 
\frac\lambda2
 \sigma'   \Big\|\frac1{\lambda n} A \vsubset{s (\uv  - \alphav )}{k}\Big\|^2  
\\
&\overset{\eqref{eq:asdfjiwjfeojawfa}}{=}
 -s \gap(\alphav)
-
\frac{\mu}{2}
(1-s)s 
  \frac1n 
 \sum_{i =1}^n 
 \|\uv-\alphav\|^2 
 + 
\frac{\sigma'}{2\lambda }
(\frac s{  n})^2
\sum_{k=1}^K   
  \| A \vsubset{  (\uv  - \alphav )}{k}\|^2.
  \tagthis 
  \label{eq:asdfafdas}
 \end{align*}
Now, the claimed improvement bound
\eqref{eq:lemma:dualDecrease_VS_dualityGap}
follows
by plugging 
\eqref{eq:asdfafdas}
into \eqref{eq:Afasfwafewaef}.

\subsection{Proof of Lemma 
\ref{lemma:BoundOnR}}

For general convex functions, the strong convexity parameter is 
$\mu=0$, and hence the definition of $\vc{R}{t}$ becomes
\begin{align*} 
\vc{R}{t}
\overset{\eqref{eq:defOfR}}{=}
  \sum _{k=1}^K   
  \| A \vsubset{  (\vc{\uv} {t} - \vc{\alphav}{t} )}{k}\|^2
\overset{\eqref{eq:definitionOfSigmaK}}{\leq}   
\sum _{k=1}^K 
\sigma_k  
  \|   \vsubset{  (\vc{\uv} {t} - \vc{\alphav}{t} )}{k}\|^2
\overset{\mbox{Lemma \ref{lemma:ajvoiewffa}}}{\leq}   
\sum _{k=1}^K 
\sigma_k  |\mathcal{P}_k| 4L^2.
\end{align*}

\subsection{Proof of Theorem \ref{thm:convergenceNonsmooth}}

At first let us estimate expected change of dual feasibility. By using the main Lemma \ref{lem:basicLemma}, we have
\begin{align*} 
 \Exp[\bD(\alphav^*)-\bD(\vc{\alphav}{t+1})]
 &=
\Exp[\bD(\alphav^*)-\bD(\vc{\alphav}{t+1})+\bD(\vc{\alphav}{t})-\bD(\vc{\alphav}{t})]
\\
&
\overset{\eqref{eq:lemma:dualDecrease_VS_dualityGap}
}{=}
\bD(\alphav^*)-\bD(\vc{\alphav}{t})
-\aggpar
(1-\Theta)  
 s \gap(\vc{\alphav}{t})
+
\aggpar
(1-\Theta)
\tfrac{\sigma'}{2\lambda }
(\frac s{  n})^2
\vc{R}{t}
\\
&
\overset{\eqref{eq:gap}
}{=}
\bD(\alphav^*)-\bD(\vc{\alphav}{t})
-\aggpar
(1-\Theta)
   s  (\bP(\wv(\vc{\alphav}{t}))-\bD(\vc{\alphav}{t}))
+
\aggpar
(1-\Theta)  \tfrac{\sigma'}{2\lambda }
(\frac s{  n})^2
\vc{R}{t} 
\\
&\leq
\bD(\alphav^*)-\bD(\vc{\alphav}{t})
-\aggpar
(1-\Theta)
 s  (\bD(\alphav^* )-\bD(\vc{\alphav}{t}) )
+
\aggpar
(1-\Theta) 
\tfrac{\sigma'}{2\lambda }
(\frac s{  n})^2
\vc{R}{t} \\
&
\overset{\eqref{eq:asfjoewjofa}}{\leq} 
\left( 
 1-\aggpar
(1-\Theta)
   s
\right) 
   (\bD(\alphav^* )-\bD(\vc{\alphav}{t}))
+
\aggpar
(1-\Theta) 
\tfrac{\sigma'}{2\lambda }
(\frac s{  n})^2
4L^2  \sigma.
\tagthis 
\label{eq:asoifejwofa}
\end{align*}
 Using
\eqref{eq:asoifejwofa}
recursively we have 
 \begin{align*} 
 \Exp[\bD(\alphav^*)-\bD(\vc{\alphav}{t})]
 &=
\left( 
 1-\aggpar
(1-\Theta)
   s
\right)^t 
   (\bD(\alphav^* )-\bD(\vc{\alphav}{0}))
+
\aggpar
(1-\Theta) 
\tfrac{\sigma'}{2\lambda }
(\frac s{  n})^2
4L^2  \sigma 
\sum_{j=0}^{t-1}
\left( 
 1-\aggpar
(1-\Theta)
   s
\right)^j
\\
&=
\left( 
 1-\aggpar
(1-\Theta)
   s
\right)^t 
   (\bD(\alphav^* )-\bD(\vc{\alphav}{0}))
+
\aggpar
(1-\Theta) 
\tfrac{\sigma'}{2\lambda }
(\frac s{  n})^2
4L^2  \sigma 
\frac{1-\left( 
 1-\aggpar
(1-\Theta)
   s
\right)^t}
     { 
  \aggpar
(1-\Theta)
   s }
\\
&\leq
\left( 
 1-\aggpar
(1-\Theta)
   s
\right)^t 
   (\bD(\alphav^* )-\bD(\vc{\alphav}{0}))
+
 s
\frac{4L^2  \sigma   \sigma'}{2\lambda n^2}. 
\tagthis
\label{eq:asfwefcaw}  
 \end{align*}
Choice of 
$s=1$ and $t= t_0:= \max\{0,\lceil  
\frac1{\aggpar (1-\Theta)}
\log(
 2\lambda n^2 (\bD(\alphav^* )-\bD(\vc{\alphav}{0}))
  / (4 L^2 \sigma \sigma')
  )
 \rceil\}$
will lead to 
\begin{align}\label{eq:induction_step1}
  \Exp[\bD(\alphav^*)-\bD(\vc{\alphav}{t})]
 &\leq  
\left( 
 1-\aggpar
(1-\Theta)  
\right)^{t_0}
  (\bD(\alphav^* )-\bD(\vc{\alphav}{0}))
+ 
\frac{4L^2  \sigma   \sigma'}{2\lambda n^2}
\leq 
\frac{4L^2  \sigma   \sigma'}{2\lambda n^2}
+
\frac{4L^2  \sigma   \sigma'}{2\lambda n^2}
=
\frac{4L^2  \sigma   \sigma'}{\lambda n^2}.
\end{align} 
Now, we are going to show that 
\begin{align}
\label{eq:expectationOfDualFeasibility}
\forall t\geq t_0 :  \Exp[\bD(\alphav^* )-\bD(\vc{\alphav}{t})]
&\leq 
\frac{4L^2  \sigma   \sigma'}{\lambda n^2( 1+ \frac12  \aggpar (1-\Theta)  (t-t_0))}.
\end{align}
Clearly, \eqref{eq:induction_step1} implies that \eqref{eq:expectationOfDualFeasibility} holds for $t=t_0$.
Now imagine that it holds for any $t\geq t_0$ then we show that it also has to hold for $t+1$. 
Indeed, using 
\begin{equation}
\label{eq:asdfjoawjdfas}
s=
\frac{1}
 {1+ \frac12 \aggpar (1-\Theta) (t-t_0)} \in [0,1]
\end{equation} 
  we obtain
\begin{align*}
\Exp[
\bD(\alphav^* )-\bD(\vc{\alphav}{t+1})]
&\overset{\eqref{eq:asoifejwofa}
}{\leq}
\left( 
 1-\aggpar
(1-\Theta)
   s
\right) 
   (\bD(\alphav^* )-\bD(\vc{\alphav}{t}))
+
\aggpar
(1-\Theta) 
\tfrac{\sigma'}{2\lambda }
(\frac s{  n})^2
4L^2  \sigma
\\
&\overset{\eqref{eq:expectationOfDualFeasibility}
}{\leq}
\left( 
 1-\aggpar
(1-\Theta)
   s
\right) 
   \frac{4L^2  \sigma   \sigma'}{\lambda n^2( 1+ \frac12  \aggpar (1-\Theta)  (t-t_0))}
+
\aggpar
(1-\Theta) 
\tfrac{\sigma'}{2\lambda }
(\frac s{  n})^2
4L^2  \sigma
\\
&
\overset{\eqref{eq:asdfjoawjdfas}}{=}
\frac{4L^2  \sigma   \sigma'}
     {\lambda n^2}
\left( 
\frac{
1+ \frac12 \aggpar (1-\Theta) (t-t_0)
-\aggpar
(1-\Theta)
+
\aggpar
(1-\Theta) 
\tfrac{1}{2}
}
 {(1+ \frac12 \aggpar (1-\Theta) (t-t_0))^2}
\right)
\\
&=
\frac{4L^2  \sigma   \sigma'}
     {\lambda n^2}
\underbrace{\left( 
\frac{
1+ \frac12 \aggpar (1-\Theta) (t-t_0)
-\frac12 \aggpar
(1-\Theta)
}
 {(1+ \frac12 \aggpar (1-\Theta) (t-t_0))^2}
\right)}_{D}.
\end{align*}
Now, we will upperbound $D$ as follows
\begin{align*}
D&=
\frac1
{1+ \frac12 \aggpar (1-\Theta) (t+1-t_0)}
\underbrace{
\frac{
(1+ \frac12 \aggpar (1-\Theta) (t+1-t_0))
(1+ \frac12 \aggpar (1-\Theta) (t-1-t_0))
}
 {(1+ \frac12 \aggpar (1-\Theta) (t-t_0))^2}}_{\leq 1}
 \\
&\leq  
\frac1
{1+ \frac12 \aggpar (1-\Theta) (t+1-t_0)},
\end{align*}
where in the last inequality we have used the fact that geometric mean
 is less or equal to arithmetic mean. 
 
If $\overline \alphav$ is defined as \eqref{eq:averageOfAlphaDefinition}
then we obtain that
\begin{align*}
\Exp[\gap(\overline\alphav)] &=  
 \Exp\left[\gap\left(\sum_{t=T_0}^{T-1} \tfrac1{T-T_0} \vc{\alphav}{t}\right)\right]
 \leq
  \tfrac1{T-T_0} \Exp\left[\sum_{t=T_0}^{T-1} \gap\left( \vc{\alphav}{t}\right)\right]
\\
&
\overset{
\eqref{eq:lemma:dualDecrease_VS_dualityGap}
,\eqref{eq:asfjoewjofa}
}{\leq}
  \tfrac1{T-T_0} \Exp\left[\sum_{t=T_0}^{T-1} 
\left(
\frac1{\aggpar
(1-\Theta)
 s}
(
\bD(\vc{\alphav}{t+1})
-
\bD(\vc{\alphav}{t})
 )
 +
\tfrac{4L^2 \sigma \sigma' s}{2\lambda n^2 }
\right)  
  \right]
\\  
 &=
\frac1{\aggpar
(1-\Theta)
 s}
   \frac1{T-T_0} 
   \Exp\left[
\bD(\vc{\alphav}{T})
-
\bD(\vc{\alphav}{T_0})
  \right] 
+\tfrac{4L^2 \sigma \sigma' s}{2\lambda n^2 }  
\\  
 &\leq
\frac1{\aggpar
(1-\Theta)
 s}
   \frac1{T-T_0} 
   \Exp\left[
\bD(\alphav^*)
-
\bD(\vc{\alphav}{T_0})
  \right] 
+\tfrac{4L^2 \sigma \sigma' s}{2\lambda n^2 }.  
\tagthis \label{eq:askjfdsanlfas}
  \end{align*}
Now, if $T\geq \lceil
\frac1{\aggpar (1-\Theta)}\rceil+T_0$ such that $T_0\geq t_0$
we obtain
\begin{align*}
\Exp[\gap(\overline\alphav)] 
&\overset{\eqref{eq:askjfdsanlfas}
,\eqref{eq:expectationOfDualFeasibility}
}{\leq}
\frac1{\aggpar
(1-\Theta)
 s}
   \frac1{T-T_0} 
\left(
\frac{4L^2  \sigma   \sigma'}{\lambda n^2( 1+ \frac12  \aggpar (1-\Theta)  (T_0-t_0))}
\right)
+\frac{4L^2 \sigma \sigma' s}{2\lambda n^2 }
\\
&=
\frac{
4L^2  \sigma   \sigma'}{\lambda n^2}
\left(
\frac1{\aggpar
(1-\Theta)
 s}
   \frac1{T-T_0} 
\frac{1}{ 1+ \frac12  \aggpar (1-\Theta)  (T_0-t_0)}
+\frac{  s}{2 }
\right). 
\tagthis
\label{eq:fawefwafewa}
\end{align*}
Choosing 
\begin{equation}
\label{eq:afskoijewofaw}
s=\frac{1}{(T-T_0) \aggpar (1-\Theta)} \in [0,1]
\end{equation}
gives us
\begin{align*}
\Exp[\gap(\overline\alphav)] 
&
\overset{\eqref{eq:fawefwafewa},
\eqref{eq:afskoijewofaw}}{\leq}
\frac{
4L^2  \sigma   \sigma'}{\lambda n^2}
\left(
\frac{1}{ 1+ \frac12  \aggpar (1-\Theta)  (T_0-t_0)}
+\frac{1}{(T-T_0) \aggpar (1-\Theta)} \frac{  1}{2 }
\right). \tagthis
\label{eq:afsjweofjwafea}
\end{align*}
To have right hand side of
\eqref{eq:afsjweofjwafea}
smaller then 
$\epsilon_\gap$
it is sufficient to choose
$T_0$ and $T$ such that
\begin{eqnarray}
\label{eq:sfadwafeewafa}
\frac{4L^2  \sigma   \sigma'}{\lambda n^2}
\left(
\frac{1}{ 1+ \frac12  \aggpar (1-\Theta)  (T_0-t_0)}
\right)
&\leq & \frac12 \epsilon_\gap,
\\
\label{eq:sfadwafeewafa2}
\frac{4L^2  \sigma   \sigma'}{\lambda n^2}
\left(
\frac{1}{(T-T_0) \aggpar (1-\Theta)} \frac{  1}{2 }
\right)
&\leq & \frac12 \epsilon_\gap.
\end{eqnarray}
Hence of 
if
\begin{eqnarray*}
t_0+
\frac{2}{ \aggpar (1-\Theta) }
\left(
\frac
{8L^2  \sigma   \sigma'}
{\lambda n^2 \epsilon_\gap}
-1
\right)
&\leq & 
 T_0 
,
\\
T_0
+
\frac
{4L^2  \sigma   \sigma'}
{\lambda n^2 \epsilon_\gap
\aggpar (1-\Theta)}
&\leq &  T,  
\end{eqnarray*}
then 
\eqref{eq:sfadwafeewafa}
and
\eqref{eq:sfadwafeewafa2}
are satisfied.

\subsection{Proof of Theorem \ref{thm:convergenceSmoothCase}
}
If the function $\ell_i(.)$ is $(1/\mu)$-smooth then $\ell_i^*(.)$ is $\mu$-strongly convex with respect to the
$\|\cdot\|$ norm.
From \eqref{eq:defOfR}
we have
\begin{align*}
\vc{R}{t}&
\overset{\eqref{eq:defOfR}}{=}
-
\tfrac{ \lambda\mu n (1-s)}{\sigma' s }
   \|\vc{\uv}{t}-\vc{\alphav}{t}\|^2 
+ 
 {\sum}_{k=1}^K   
  \| A \vsubset{  (\vc{\uv}{t}  - \vc{\alphav}{t} )}{k}\|^2
\\%
&
\overset{\eqref{eq:definitionOfSigmaK}}{\leq}  
-
\tfrac{ \lambda\mu n (1-s)}{\sigma' s }
   \|\vc{\uv}{t}-\vc{\alphav}{t}\|^2 
+ 
 {\sum}_{k=1}^K   
 \sigma_k
  \|  \vsubset{   \vc{\uv}{t}  - \vc{\alphav}{t}  }{k}\|^2
\\
&\leq
-
\tfrac{ \lambda\mu n (1-s)}{\sigma' s }
   \|\vc{\uv}{t}-\vc{\alphav}{t}\|^2 
+
\sigma_{\max} 
 {\sum}_{k=1}^K   
  \|  \vsubset{   \vc{\uv}{t}  - \vc{\alphav}{t}  }{k}\|^2
\\
&=
\left(
-
\tfrac{ \lambda\mu n (1-s)}{\sigma' s }
+\sigma_{\max}
\right)
   \|\vc{\uv}{t}-\vc{\alphav}{t}\|^2.\tagthis
   \label{eq:afjfocjwfcea} 
\end{align*}
 If we plug 
 \begin{equation}
 s=
  \frac{ \lambda\mu n }
      {\lambda\mu n+
\sigma_{\max} \sigma'}\in [0,1]
\label{eq:fajoejfojew}
\end{equation} 
into
\eqref{eq:afjfocjwfcea}
we obtain that
$\forall t: \vc{R}{t}\leq 0$.
Putting the  same $s$
into
\eqref{eq:lemma:dualDecrease_VS_dualityGap}
will give us
\begin{align*}
&\Exp[
\bD(\vc{\alphav}{t+1})
-
\bD(\vc{\alphav}{t})
 ]
\overset{\eqref{eq:lemma:dualDecrease_VS_dualityGap}
,\eqref{eq:fajoejfojew}}{\geq}
\aggpar
(1-\Theta)
 \frac{ \lambda\mu n }
      {\lambda\mu n+
\sigma_{\max} \sigma'} \gap(\vc{\alphav}{t})
\geq
\aggpar
(1-\Theta)
 \frac{ \lambda\mu n }
      {\lambda\mu n+
\sigma_{\max} \sigma'} \bD(\alphav^*)-\bD(\vc{\alphav}{t}).
\tagthis
\label{eq:fasfawfwaf}
\end{align*}
Using the fact that
$\Exp[\bD(\vc{\alphav}{t+1})-\bD(\vc{\alphav}{t})]
=\Exp[\bD(\vc{\alphav}{t+1})-\bD(\alphav^*)]
+\bD(\alphav^*)-\bD(\vc{\alphav}{t})
$
we have 
\begin{align*}
\Exp[\bD(\vc{\alphav}{t+1})-\bD(\alphav^*)]
+\bD(\alphav^*)-\bD(\vc{\alphav}{t})
\overset{
\eqref{eq:fasfawfwaf}}
{
\geq
}
\aggpar
(1-\Theta)
 \frac{ \lambda\mu n }
      {\lambda\mu n+
\sigma_{\max} \sigma'} \bD(\alphav^*)-\bD(\vc{\alphav}{t})
\end{align*}
which is equivalent with
\begin{align*}
\Exp[\bD(\alphav^*)-\bD(\vc{\alphav}{t+1})]
\leq 
\left(
1-\aggpar
(1-\Theta)
 \frac{ \lambda\mu n }
      {\lambda\mu n+
\sigma_{\max} \sigma'}\right)
\bD(\alphav^*)-\bD(\vc{\alphav}{t}).
\tagthis \label{eq:affpja}
\end{align*}
Therefore if we denote by $\vc{\epsilon_\bD}{t} = \bD(\alphav^*)-\bD(\vc{\alphav}{t})$
we have that
\begin{align*}
 \Exp[\vc{\epsilon_\bD}{t}] 
 \overset{\eqref{eq:affpja}}{\leq}   \left(
 1-\aggpar
(1-\Theta)
 \frac{ \lambda\mu n }
      {\lambda\mu n+
\sigma_{\max} \sigma'}
   \right)^t \vc{\epsilon_\bD}{0}
\overset{\eqref{eq:afjfjaoefvcwa}}{\leq}
\left(
 1-\aggpar
(1-\Theta)
 \frac{ \lambda\mu n }
      {\lambda\mu n+
\sigma_{\max} \sigma'}
   \right)^t
\leq \exp\left(-t \aggpar
(1-\Theta)
 \frac{ \lambda\mu n }
      {\lambda\mu n+
\sigma_{\max} \sigma'}
     \right).
\end{align*}
The right hand side will be smaller than some $\epsilon_\bD$ if 
$$
 t   
    \geq 
\frac{1}
   {\aggpar
(1-\Theta)}
\frac
{\lambda\mu n+
\sigma_{\max} \sigma'}
{ \lambda\mu n }
    \log \frac1{\epsilon_\bD}.
$$
Moreover, to bound the duality gap, we have
\begin{align*}
\aggpar
(1-\Theta)
 \frac{ \lambda\mu n }
      {\lambda\mu n+
\sigma_{\max} \sigma'} \gap(\vc{\alphav}{t})
&
\overset{
\eqref{eq:fasfawfwaf}
}{\leq}
\Exp[
\bD(\vc{\alphav}{t+1})
-
\bD(\vc{\alphav}{t})
 ]
\leq 
\Exp[
\bD(\alphav^*)
-
\bD(\vc{\alphav}{t})
 ]. 
\end{align*}
Therefore  $\gap(\vc{\alphav}{t})\leq 
\frac1{
\aggpar
(1-\Theta)}
 \frac      {\lambda\mu n+
\sigma_{\max} \sigma'} 
{ \lambda\mu n }    \vc{\epsilon_\bD}{t}$.  
Hence if $\epsilon_\bD \leq 
\aggpar
(1-\Theta)
 \frac{ \lambda\mu n }
      {\lambda\mu n+
\sigma_{\max} \sigma'} 
 \epsilon_\gap $
then $\gap(\vc{\alphav}{t})\leq \epsilon_\gap$.
Therefore
after 
$$
 t   
    \geq 
\frac{1}
   {\aggpar
(1-\Theta)}
\frac
{\lambda\mu n+
\sigma_{\max} \sigma'}
{ \lambda\mu n }
    \log 
\left(
\frac{1}
   {\aggpar
(1-\Theta)}
\frac
{\lambda\mu n+
\sigma_{\max} \sigma'}
{ \lambda\mu n }
    \frac1{\epsilon_\gap}
    \right) 
$$
iterations we have obtained a duality gap less than $\epsilon_\gap$.

\subsection{Proof of Theorem \ref{thm:LocalSDCA_smooth2}}

Because $\ell_i$ are $(1/\mu)$-smooth then 
functions
$\ell_i^*$ are $\mu$
strongly convex with respect to the norm $\|\cdot\|$.
The proof is based on
techniques developed in recent coordinate descent papers, including
\cite{richtarik,
richtarik2013distributed,richtarikBigData,TTR:IMPROVRED,
marecek2014distributed,APPROX,lu2013complexity,fercoq2014fast,ALPHA,QUARTZ} (Efficient accelerated variants were considered in \cite{APPROX,  ASDCA}).

First, let us define the
function
$F(\zetav): \R^{n_k} \to \R$
as 
$F(\zetav) := -\Ggk( 
\sum_{i \in \mathcal{P}_k} \zeta_i \ev_i; \wv, \vsubset{\alphav}{k})
$.  This function can be written in two parts
$F(\zetav) = \Phi(\zetav) + f(\zetav)$.
The first part
denoted by 
$\Phi(\zetav)
 =\frac1n\sum_{i \in \mathcal{P}_k} 
\ell_i^*(-\alpha_i - \zeta_i)$
is strongly convex
with convexity parameter
$\frac{\mu}{n}$
with respect to the standard Euclidean norm.
In our application, we think of the $\zetav$ variable collecting the local dual variables $\vsubset{\Delta \alphav}{k}$.

The second part
we will denote by
$f(\zetav)
 = 
  \frac1K 
\frac{\lambda}{2}
\|\wv(\alphav)\|^2
+\frac1n
\sum_{i \in \mathcal{P}_k}
\wv(\alphav)^T \xv_i \zeta_i
+
\frac\lambda2
 \sigma'  
\frac1{\lambda^2 n^2} 
 \| \sum_{i \in \mathcal{P}_k}  \xv_i \zeta_i \|^2 
 $.
It is easy to show
that the gradient of $f$ is coordinate-wise Lipschitz  
 continuous
with Lipschitz constant
$ \frac{\sigma'}{\lambda n^2} r_{\max}$
with respect to the standard Euclidean norm.

Following the
 proof of Theorem 20 in \cite{richtarikBigData}, 
we obtain that
\begin{align*}
\Exp[\Ggk( 
   \vsubset{\Delta \alphav^*}{k}; \wv, \vsubset{\alphav}{k})
-   \Ggk( 
\vc{
  \vsubset{\Delta \alphav}{k}
  }{h+1}; \wv, \vsubset{\alphav}{k})
  ]
&\leq 
\left(
1-\frac1{n_k}
 \frac{1+\frac{\mu n \lambda}{\sigma' r_{\max}}}
      {\frac{\mu n \lambda}{\sigma' r_{\max}}}
\right)
\left(
\Ggk( 
   \vsubset{\Delta \alphav^*}{k};\wv, \vsubset{\alphav}{k})
-   \Ggk( 
\vc{
  \vsubset{\Delta \alphav}{k}
  }{h}; \wv, \vsubset{\alphav}{k})
\right) 
\\
&=
\left(
1-\frac1{n_k}
 \frac
      {    \lambda n \mu }
      {\sigma' r_{\max}+ \lambda n \mu }
\right)
\left(
\Ggk( 
   \vsubset{\Delta \alphav^*}{k}; \wv, \vsubset{\alphav}{k})
-   \Ggk( 
\vc{
  \vsubset{\Delta \alphav}{k}
  }{h}; \wv, \vsubset{\alphav}{k})
\right). 
\end{align*}
Over all steps up to step $h$, this gives
\begin{align*}
\Exp[\Ggk( 
   \vsubset{\Delta \alphav^*}{k}; \wv, \vsubset{\alphav}{k})
-   \Ggk( 
\vc{
  \vsubset{\Delta \alphav}{k}
  }{h}; \wv, \vsubset{\alphav}{k})
  ]
&\leq 
\left(
1-\frac1{n_k}
 \frac
      {    \lambda n \mu }
      {\sigma' r_{\max}+ \lambda n \mu }
\right)^h
\left(
\Ggk( 
   \vsubset{\Delta \alphav^*}{k}; \wv, \vsubset{\alphav}{k})
-   \Ggk({\bf 0}; \wv, \vsubset{\alphav}{k})
\right). 
\end{align*}
 Therefore, choosing 
 $H$ as in the assumption of our Theorem, given in Equation
 \eqref{eq:asjfwjfdwafcea},
 we are guaranteed that
 $\left(
1-\frac1{n_k}
 \frac
      {    \lambda n \mu }
      {\sigma' r_{\max}+ \lambda n \mu }
\right)^H \leq \Theta$, as desired.

\subsection{Proof of Theorem \ref{thm:LocalSDCA_smooth1}
}
Similarly as in the 
proof of Theorem 
\ref{thm:LocalSDCA_smooth2} 
we define a composite function $F(\zetav)
= f(\zetav)+\Phi(\zetav) $.
However, in this case functions
$\ell_i^*$ are not guaranteed to be strongly convex.
However, the first part has still a coordinate-wise Lipschitz continuous gradient with constant
$ \frac{\sigma'}{\lambda n^2} r_{\max}$
with respect to the standard Euclidean norm.
Therefore from Theorem 3 in \cite{TTR:IMPROVRED}
we have that
\begin{align*}
\Exp[\Ggk( 
   \vsubset{\Delta \alphav^*}{k}; \wv, \vsubset{\alphav}{k})
-   \Ggk( 
\vc{
  \vsubset{\Delta \alphav}{k}
  }{h}; \wv, \vsubset{\alphav}{k})
  ]
&\leq 
 \frac{n_k}{n_k+h}
 \left(
 \Ggk( 
   \vsubset{\Delta \alphav^*}{k}; \wv, \vsubset{\alphav}{k})
-   \Ggk( {\bf 0}; \wv, \vsubset{\alphav}{k})
  +\frac12 \frac{\sigma'r_{\max}}{\lambda n^2}  \| \vsubset{\Delta \alphav^*}{k}\|^2
 \right).
 \tagthis
 \label{eq:afewfew}
\end{align*}
 Now, choice 
 of $h=H$ from 
 \eqref{eq:H_convexLoss}
 is sufficient to have
 the right hand side of
 \eqref{eq:afewfew} to be 
 $\leq  
\Theta \big(\Ggk( 
   \vsubset{\Delta \alphav^*}{k}; \wv, \vsubset{\alphav}{k})
-   \Ggk( {\bf 0}; \wv, \vsubset{\alphav}{k}) \big)$.

\section{Relationship of DisDCA to \cocoap}
\label{app:disDCA}
\newcommand{\wlocal}{\uv^{\text{\tiny local}}}
\newcommand{\prev}{{\text{\tiny prev}}}

We are indebted to Ching-pei Lee for showing the following relationship between the practical variant of DisDCA \cite{Yang:2013vl}, and \cocoap when SDCA is chosen as the local solver:

Considering the practical variant of DisDCA (DisDCA-p, see Figure~2 in \cite{Yang:2013vl}) using the scaling parameter $scl=K$, the following holds:

\begin{lemma}\label{lem:equivDisDCA}
Assume that the dataset is partitioned equally between workers,  i.e. $\forall k: n_k = \frac{n}{K}$.
If within the \cocoap framework, SDCA is used as a local solver, and the subproblems are formulated using our shown ``safe'' (but pessimistic) upper bound of $\sigma'=K$, with aggregation parameter $\aggpar=1$ (adding), then the \cocoap framework reduces exactly to the DisDCA-p algorithm. 
\end{lemma}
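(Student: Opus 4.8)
The plan is to verify that, under the stated parameter choices, the inner coordinate step of \localSDCA applied to the \cocoap subproblem \eqref{eq:subproblem} coincides line-for-line with the per-coordinate update of the practical DisDCA-p algorithm (Figure~2 in \cite{Yang:2013vl}) with scaling $scl = K$, and that the outer aggregation with $\aggpar = 1$ reproduces DisDCA-p's additive update to the shared vector. Since both algorithms are completely specified once their per-coordinate subproblems and their aggregation rules are fixed, matching these two ingredients suffices.

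First I would isolate the $\delta_i$-dependent part of the \cocoap coordinate subproblem. Fixing a coordinate $i \in \mathcal{P}_k$, substituting $\sigma' = K$, using $A\ev_i = \xv_i$ to expand the quadratic $\|\tfrac1{\lambda n}A(\vc{\vsubset{\Delta\alphav}{k}}{h} + \delta_i\ev_i)\|^2$, and discarding the terms that do not depend on $\delta_i$, the step of Algorithm~\ref{alg:localSDCA} reduces to
\begin{equation*}
\max_{\delta_i} \; -\tfrac1n \ell_i^*\big(-\alpha_i - (\vc{\vsubset{\Delta\alphav}{k}}{h})_i - \delta_i\big) - \tfrac1n \delta_i\, \xv_i^T\big(\wv + K\,\Delta\wv_k^{\text{loc}}\big) - \tfrac{K}{2\lambda n^2}\|\xv_i\|^2 \delta_i^2,
\end{equation*}
where $\Delta\wv_k^{\text{loc}} := \tfrac1{\lambda n}A\,\vc{\vsubset{\Delta\alphav}{k}}{h}$ is the primal change accumulated locally within the current round. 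The crucial bookkeeping is that the cross term of the quadratic contributes exactly the factor $K$ multiplying $\Delta\wv_k^{\text{loc}}$ inside the linear coefficient, while the diagonal part yields the damping $-\tfrac{K}{2\lambda n^2}\|\xv_i\|^2\delta_i^2$.

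Second, I would write out the DisDCA-p coordinate step with $scl = K$ and the balanced split $n_k = n/K$. There, each worker forms its current primal estimate as the shared $\wv$ plus $K$ times its locally accumulated change, and damps the coordinate step by a quadratic penalty with coefficient $scl\,\|\xv_i\|^2/(\lambda n^2) = K\|\xv_i\|^2/(\lambda n^2)$. Comparing term by term, the conjugate term, the linear term with its $\wv + K\Delta\wv_k^{\text{loc}}$ coefficient, and the quadratic damping all coincide, so the two argmax problems are identical and return the same $\delta_i^*$; an induction over the inner iterations $h$ then shows the two local solvers generate the same sequence $\vc{\vsubset{\Delta\alphav}{k}}{h}$. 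Finally, since $\aggpar = 1$, the \cocoap dual update $\vsubset{\vc{\alphav}{t+1}}{k} = \vsubset{\vc{\alphav}{t}}{k} + \vsubset{\Delta\alphav}{k}$ and primal update $\vc{\wv}{t+1} = \vc{\wv}{t} + \sum_{k}\Delta\wv_k$ add the local contributions directly, matching DisDCA-p's additive aggregation.

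The main obstacle I anticipate is purely one of reconciling the two papers' normalization conventions: one must track the factors of $n$, $\lambda$, and especially $K$ through the conjugate-dual formulation so that the scaling parameter $scl = K$ of DisDCA-p lands in exactly the two places where the choice $\sigma' = K$ places it in the \cocoap subproblem, namely on the quadratic damping and, via the cross term, on the accumulated local primal vector. The balanced assumption $n_k = n/K$ is precisely what makes these constants align uniformly across all workers; without it the per-worker penalties would differ and the equivalence would hold only approximately.
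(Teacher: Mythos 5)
Your proposal is correct and follows essentially the same route as the paper's proof: expanding the single-coordinate step of the $\sigma'=K$ subproblem, identifying the locally maintained primal vector $\wv + \tfrac{K}{\lambda n}A\vsubset{\Delta\alphav}{k}$ via the cross term, matching it to DisDCA-p's update through the balanced-partition identity $\tfrac{1}{\lambda n_k} = \tfrac{K}{\lambda n}$, and closing with an induction over the inner iterations plus the trivial matching of the $\aggpar=1$ additive aggregation. The only cosmetic difference is that the paper rescales the subproblem by $n$ before comparing, which does not affect the argmax.
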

\begin{proof}(Due to Ching-pei Lee, with some reformulations).
As defined in \eqref{eq:subproblem}, the data-local subproblem solved by each machine in \cocoap is defined as
\[
\max_{\vsubset{\Delta \alphav}{k}\in\R^{n}} \ 
\Ggk(  \vsubset{\Delta \alphav}{k}; \wv, \vsubset{\alphav}{k})
\]
where
\[
\Ggk(  \vsubset{\Delta \alphav}{k}; \wv, \vsubset{\alphav}{k})
\eqdef
-\frac1n\sum_{i \in \mathcal{P}_k} 
\ell_i^*(-\alpha_i - (\vsubset{\Delta \alphav}{k})_i)
- \frac1K 
\frac{\lambda}{2}
\|\wv\|^2
-\frac1n
\wv^T A \vsubset{\Delta \alphav}{k}
- \frac\lambda2
 \sigma'  \Big\|\frac1{\lambda n} A \vsubset{\Delta \alphav}{k}\Big\|^2 \ .
\]
We rewrite the local problem by scaling with $n$, and removing the constant regularizer term $\frac1K \frac{\lambda}{2}\|\wv\|^2$, i.e.
\begin{equation}
\tilde{\Ggk}(  \vsubset{\Delta \alphav}{k}; \wv)
 \eqdef 
-\sum_{j \in \mathcal{P}_k} 
\ell_i^*(-\alpha_j - (\vsubset{\Delta \alphav}{k})_j)
-
\wv^T A \vsubset{\Delta \alphav}{k}
- \frac{\sigma'}{2\lambda n}
   \Big\| A \vsubset{\Delta \alphav}{k}\Big\|^2 \ .
\end{equation}

For the correspondence of interest, we now restrict to single coordinate updates in the local solver. 
In other words, the local solver optimizes exactly one coordinate $i \in \mathcal{P}_k$ at a time.
To relate the single coordinate update to the set of local variables, we will use the notation
\begin{equation}\label{eq:singleCoordNotation}
\vsubset{\Delta \alphav}{k} =: \vsubset{\Delta \alphav^{\prev}}{k} + \delta\ev_i \ ,
\end{equation}
so that $\vsubset{\Delta \alphav^{\prev}}{k}$ are the previous local variables, and $\vsubset{\Delta \alphav}{k}$ will be the updated ones.

From now on, we will consider the special case of \cocoap when the quadratic upper bound parameter is chosen as the ``safe'' value $\sigma'=K$, combined with adding as the aggregation, i.e. $\aggpar=1$.

Now if the local solver within \cocoap is chosen as \localSDCA, then one local step on the subproblem~(\ref{eq:subproblem}) will calculate the following coordinate update. Recall that $A=[\xv_1, \xv_2, \dots, \xv_n] \in \R^{d\times n}$.
\begin{equation}
\delta^\star := \argmax_{\delta\in\R} \ 
  \tilde{\Ggk}(  \vsubset{\Delta \alphav}{k}; \wv)
\end{equation}
which -- because it is only affecting one single coordinate, employing \eqref{eq:singleCoordNotation} -- can be expressed as
\begin{align}
\delta^\star :=& \argmax_{\delta\in\R} \ 
-\ell^*_{i}(-(\alpha_i+(\vsubset{\Delta \alphav^{\prev}}{k})_i+\delta))
-\delta \xv_{i}^T\wv
- \frac{K}{\lambda n} \delta \xv_i^T A \vsubset{\Delta \alphav^{\prev}}{k}
-\frac{K}{2\lambda n}\delta^2 \|\xv_{i}\|_2^2
\notag\\
=& \argmax_{\delta\in\R} \ 
-\ell^*_{i}(-(\alpha_{i}+(\vsubset{\Delta \alphav^{\prev}}{k})_i+\delta))
-\delta \xv_{i}^T \Big( \underbrace{ \wv
+ \frac{K}{\lambda n}  A \vsubset{\Delta \alphav^{\prev}}{k} }_{=:\wlocal} \Big)
-\frac{K}{2\lambda n}\delta^2 \|\xv_{i}\|_2^2
\label{eq:coordupdates}
\end{align}
From this formulation, it is apparent that single coordinate local solvers should maintain their locally updated version of the current primal parameters, which we here denote as
\begin{equation}
\wlocal = \wv + \frac{K}{\lambda n} A\vsubset{\Delta \alphav^{\prev}}{k} \ .
\end{equation}

In the practical variant of DisDCA, the summarized local primal updates are 
$
\Delta \wlocal = \frac{1}{\lambda n_k} A\vsubset{\Delta \alphav}{k}
$.
For the balanced case $n_k = n/K$ for $K$ being the number of machines, this means the local $\wlocal$ update of DisDCA-p is
\begin{align}
\Delta \alpha_i^\star :=& \argmax_{\Delta \alpha_i\in\R} \ 
-\ell^*_{i}(-(\alpha_{i}+\Delta \alpha_i))
-\Delta \alpha_i \xv_{i}^T  \wlocal 
-\frac{K}{2\lambda n}(\Delta \alpha_i)^2 \|\xv_{i}\|_2^2 \ .
\label{eq:disDCAupdates}
\end{align}

It is not hard to show that during one outer round, the evolution of the local dual variables $\vsubset{\Delta \alphav}{k}$ is the same in both methods, such that they will also have the same trajectory of $\wlocal$. This requires some care if the same coordinate is sampled more than once in a round, which can happen in \localSDCA within \cocoap and also in DisDCA-p.
\end{proof}

\paragraph{Discussion.}
In the view of the above lemma, we will summarize the connection of the two methods as follows:

\begin{itemize}
\item \textbf{\cocoa/+ is Not an Algorithm.}
In contrast, it is a framework which allows to use \textit{any local solver} to perform approximate steps on the local subproblem.
This additional level of abstraction (from the definition of such local subproblems in \eqref{eq:subproblem}) is the first to allow \textit{reusability} of any fast/tuned and problem specific single machine solvers, while decoupling this from the distributed algorithmic scheme, as presented in Algorithm \ref{alg:cocoa}.

Concerning the choice of local solver to be used within \cocoa/+,  SDCA is \emph{not} the fastest known single machine solver for most applications.
Much recent research has shown improvements on SDCA \cite{ShalevShwartz:2013wl}, such as accelerated variants \cite{MinibatchASDCA} and other approaches including variance reduction, methods incorporating second-order information, and importance sampling.
In this light, we encourage the user of the \cocoa or \cocoap framework to plug in the best and most recent solver available for their particular local problem (within Algorithm \ref{alg:cocoa}), which is not necessarily SDCA. This choice should be made explicit especially when comparing algorithms.
Our presented convergence theory from Section~\ref{sec:convergence} will still cover these choices, since it only depends on the relative accuracy $\Theta$ of the chosen local solver.

\item \textbf{\cocoap is Theoretically Safe, while still Adaptive to the Data.}
The general definition of the local subproblems, and therefore the treatment of the varying separable bound on the objective -- quantified by $\sigma'$ -- allows our framework to adapt to the difficulty of the data partition and still give convergence results. 
The data-dependent measure $\sigma'$ is fully decoupled from what the user of the framework prefers to employ as a local solver (see also the comment below that \cocoa is not a coordinate solver).

The safe upper bound $\sigma'=K$ is worst-case pessimistic, for the convergence theory to still hold in all cases, when the updates are added.
Using additional knowledge from the input data, better bounds and therefore better step-sizes can be achieved in \cocoap.
An example when $\sigma'$ can be safely chosen much smaller is when the data-matrix satisfies strong row/column sparsity, see e.g. Lemma 1 in \cite{richtarik2013distributed}.

\item \textbf{Obtaining DisDCA-p as a Special Case.}
As shown in Lemma \ref{lem:equivDisDCA} above, we have that if in \cocoap, if SDCA is used as the local solver
and the pessimistic upper bound of $\sigma'=K$ is used
and, moreover, the dataset is partitioned equally, 
i.e. $\forall k: n_k = \frac{n}{K}$,
then the \cocoap framework reduces exactly to the DisDCA-p algorithm by  \cite{Yang:2013vl}. 

The correspondence breaks down if the subproblem parameter is chosen to a practically good value $\sigma'\ne K$. Also, as noted above, SDCA is often not the best local solver currently available. In our above experiments, SDCA was used just for demonstration purposes and ease of comparison. Furthermore, the data partition might often be unbalanced in practical applications.

While both DisDCA-p and \cocoa are special cases of \cocoap, we note that DisDCA-p can not be recovered as a special case of the original \cocoa framework \cite{jaggi2014communication}.

\item \textbf{\cocoa/+ are Not Coordinate Methods.}
Despite the original name being motivated from this special case,  \cocoa and \cocoap are \emph{not} coordinate methods. In fact, \cocoap as presented here for the adding case ($\aggpar = 1$) is much more closely related to a batch method applied to the dual, using a block-separable proximal term, as following from our new subproblem formulation \eqref{eq:subproblem}, depending on $\sigma'$. See also the remark in Section \ref{sec:relatedWork}.
The framework here (Algorithm \ref{alg:cocoa}) gives more generality, as the used local solver is not restricted to be a coordinate-wise one. In fact the framework allows to translate recent and future improvements of single machine solvers directly to the distributed setting, by employing them within Algorithm \ref{alg:cocoa}.
DisDCA-p works very well for several applications, but is restricted to using local coordinate ascent (SDCA) steps.

\item \textbf{Theoretical Convergence Results.}
While DisDCA-p \cite{Yang:2013vl} was proposed without theoretical justification (hence the nomenclature), the main contribution in the paper here -- apart from the arbitrary local solvers -- is the convergence analysis for the framework.
The theory proposed in \cite{Yang:2013ui} is given only for the setting of orthogonal partitions, i.e., when $\sigma'=1$ and the problems become trivial to distribute given the orthogonality of data between the workers.

The theoretical analysis here gives convergence rates applying for Algorithm \ref{alg:cocoa} when using arbitrary local solvers, and inherits the performance of the local solver.
As a special case, we obtain the first theoretical justification and convergence rates for original \cocoa in the case of general convex objective, as well as for the special case of DisDCA-p for both general convex and smooth convex objectives.

\end{itemize}

\end{document}